\title{Learning Mixtures of Random Utility Models with Features from Incomplete Preferences}
\newtheorem{thm}{Theorem}
\newtheorem{dfn}{Definition}
\newtheorem{lem}{Lemma}
\newtheorem{ex}{Example}
\newtheorem{ass}{Assumption}
\newtheorem{coro}{Corollary}
\newcommand{\rank}{\text{rank}}
\newcommand{\norm}{\text{norm}}
\newcommand{\bignorm}{\text{Norm}}
\newcommand{\ma}{\mathcal A}
\newcommand{\mi}{\mathcal I}
\newcommand{\ml}{\mathcal L}
\newcommand{\mt}{\mathcal T}
\newcommand{\mm}{\mathcal M}
\newcommand{\mx}{\mathcal X}
\newcommand{\xji}{\vec x_{ji}}
\newcommand{\xjp}[1]{\vec x_{j#1}}
\newcommand{\xjir}{x_{ji, r}}
\newcommand{\plx}{\text{PL}_\mx}
\newcommand{\rumx}{\text{RUM}_\mx}
\newcommand{\rumxp}{\text{RUM}_{\mx, p}}
\newcommand{\plxp}{\text{PL}_\mx\text{-TO}}
\newcommand{\plyz}{\text{PL}_{Y, Z}}
\newcommand{\plyzp}{\text{PL}_{Y, Z}\text{-TO}}
\newcommand{\cll}{\text{CLL}_\mm}
\newcommand\cl{\text{CL}_\mm}
\newcommand{\appLem}[2]{\vspace{1mm}\noindent{\bf Lemma~\ref{#1}.} {\em #2}}
\newcommand{\appThm}[2]{\vspace{1mm}\noindent{\bf Theorem~\ref{#1}.} {\em #2 \vspace{1mm}}}
\newcommand{\appCoro}[2]{\vspace{1mm}\noindent{\bf Corollary~\ref{#1}.}{\em #2 \vspace{1mm}}}
\author{
Zhibing Zhao$^1$\footnote{Work done while at Rensselaer Polytechnic Institute.}\and
Ao Liu$^2$\and
Lirong Xia$^2$
\affiliations
$^1$Microsoft, 555 110TH Ave NE, Bellevue, WA, 98004\\
$^2$Rensselaer Polytechnic Institute, 110 8th Street, Troy, NY, 12180
\emails
zhaozb08@gmail.com,
liua6@rpi.edu,
xial@cs.rpi.edu
}
\begin{document}

\maketitle
%\linenumbers

\begin{abstract}
Random Utility Models (RUMs), which subsume Plackett-Luce model (PL) as a special case, are among the most popular models for preference learning. In this paper, we consider RUMs with features and their mixtures, where each alternative has a vector of features, possibly different across agents. Such models significantly generalize the standard PL and RUMs, but are not as well investigated in the literature. We extend mixtures of RUMs with features to models that generate incomplete preferences and characterize their identifiability. For PL, we prove that when PL with features is identifiable, its MLE is consistent with a strictly concave objective function under mild assumptions, by characterizing a bound on root-mean-square-error (RMSE), which naturally leads to a sample complexity bound. We also characterize identifiability of more general RUMs with features and propose a generalized RBCML to learn them. Our experiments on synthetic data demonstrate the effectiveness of MLE on PL with features with tradeoffs between statistical efficiency and computational efficiency. Our experiments on real-world data show the prediction power of PL with features and its mixtures.
\end{abstract}

\section{Introduction}
\label{sec:intro}

Preference learning is a fundamental machine learning problem in a wide range of applications such as discrete choice analysis~\citep{McFadden73:Conditional}, marketing~\citep{Berry95:Automobile}, meta-search engines~\citep{Dwork01:Rank}, information retrieval~\citep{Liu09:Learning}, recommender systems~\citep{Baltrunas10:Group}, crowdsourcing~\citep{Chen13:Pairwise,Mao13:Better},  social choice~\citep{Azari12:Random}, among many others. Plackett-Luce model (PL)~\citep{Plackett75:Analysis,Luce59:Individual} is one of the most popular statistical models for preference learning due to its interpretability and computational tractability. In the standard PL, each alternative is parameterized by a real number $\theta$ such that $e^\theta$ represents the alternative's ``quality''. The higher the quality is, the more likely the alternative is ranked higher by an agent. %Efficient algorithms have been developed for parameter estimation under PL~\cite{Hunter04:MM,Soufiani13:Generalized,Maystre15:Fast,Khetan16:Data}.\lirong{I remembered there is a followup work of LSR, please add the reference. Is it \cite{Yildiz20:Fast}?}

Extensions of the standard PL has been proposed and studied mostly in three dimensions. The first dimension is {\em PL with features}, where features of the agents and/or the alternatives are given, and the model is parameterized by the relationship (often linear, see Definition~\ref{def:fpl}) between the features and the quality of the alternatives. Examples include the conditional logit model~\citep{McFadden73:Conditional}, the BLP model~\citep{Berry95:Automobile}, and bilinear models~\citep{Azari13:Preference,Schafer18:Dyad,Zhao18:Cost}.

\begin{table*}[htp]
    \centering
    \begin{tabular}{lccccc}
    \toprule
     & features & top-$l$ & mixtures & identifiability & consistency\\
    \hline
  \bf This work & \checkmark & \checkmark & \checkmark & \checkmark (Thm.~\ref{thm:idplx}, \ref{thm:idkplx}, Coro.~\ref{coro:idbilinear}, \ref{coro:idkplx})& \checkmark (Thm.~\ref{thm:msebound})\\
\hline
  \citep{Tkachenko16:Plackett} & \checkmark & \checkmark & \checkmark & &\\
\hline
\citep{Yildiz20:Fast} & \checkmark & \checkmark & & &\\
\hline
    \begin{tabular}{@{}l}\citep{McFadden73:Conditional},\\\citep{Berry95:Automobile}\end{tabular} & \checkmark & top-$1$ & & \checkmark & \checkmark\\
\hline
    \citep{Grun08:Identifiability} & \checkmark & top-$1$ & & \checkmark &\\
\hline
    \citep{Schafer18:Dyad} & \checkmark & linear orders  & & \checkmark &\\
\hline
    \begin{tabular}{@{}l}\citep{Zhao16:Learning},\\ \citep{Zhao18:Learning},\\\citep{Zhao19:Learning}\end{tabular} & & linear orders & \checkmark & \checkmark & \checkmark\\
\hline
    \citep{Chierichetti18:Learning} & & \checkmark & \checkmark & \checkmark & \\
    \bottomrule
    \end{tabular}
    \caption{The model, identifiability, and consistency of this work compared with previous work.}
    \label{tab:related}
\end{table*}

The second dimension is {\em PL for partial preferences}, where the data consist of partial preferences, often represented by partial orders over the alternatives. Due to the hardness of tackling general partial orders~\citep{Liu19:Learning}, most previous work focused on natural sub-cases, including choice data~\citep{Train09:Discrete}, top-ranked orders (top-$l$)~\citep{Mollica17:Bayesian}, and pairwise preferences~\citep{Hullermeier08:Label}. %\lirong{what are good references for top-ranked pref and pairwise?} 
In particular, in a top-$l$ order, the agent reports a linear order over her most-preferred $l$  alternatives. Top-$l$ orders generalize standard PL ($l=m-1$, where $m$ is the number of alternatives) and choice data ($l=1$). 

The third dimension is {\em PL mixture models}, where $k\ge 1$ PL models are combined via a {\em mixing coefficients $\vec\alpha=(\alpha_1,\ldots,\alpha_k)$} and each $\alpha_i$ represents the probability that the data is generated from the $i$-th PL. Mixtures of PLs  provide better fitness to data~\citep{Tkachenko16:Plackett} and are a popular method for clustering~\citep{Gormley08:Exploring}, but they are generally hard to compute and are prone to criticisms on interpretability and trustworthiness due to their (non-)identifiability~\citep{Zhao16:Learning}. 

While there is a large literature on standard PL and its extensions in each of the three dimensions, little is known about the generalization of PL in all three dimensions simultaneously, i.e.~mixtures of PL models with features for partial preferences. The literature on general RUMs and their extensions is far more limited. %See Table~\ref{tab:related} for a quick summary.
%This fully general model might be too hard to tackle, as little is known for general partial-order data. A more realistic objective is to focus on a widely-used partial order, e.g.~top-$l$ preferences. 
The problem is already highly challenging for top-$l$ orders---to the best of our knowledge, only one previous work studied mixtures of PL models with features for top-$l$ orders~\citep{Tkachenko16:Plackett}, where an EM algorithm was proposed yet no theoretical guarantees on the model or the algorithm were given. %In particular, it is not clear whether the proposed EM algorithm is consistent.

Motivated by the lack of theoretical understandings of the general PL extensions and learning algorithms, we ask the following question. {\bf When and how can preference learning be done for (mixtures of) RUMs with features for incomplete preferences?}

The question is highly challenging and the answer is not fully known even for many well-studied sub-cases, such as (non-mixture) PL with features for top-$l$ orders and mixtures of standard PL (without features) for linear orders. In this paper, we provide the first answers to the  question for PL with features for top-$l$ orders by characterizing their {\em identifiability}, {\em consistency}, and {\em sample complexity} of their MLEs. We also provide the first generic identifiability result for its mixture models as well as more general RUMs with features.

Identifiability is a fundamental property of statistical models that is important to explainability and trustworthiness of decisions, which are particularly relevant in preference learning scenarios~\citep{Gormley08:Exploring}. Identifiability requires that different parameters of the model lead to different distributions over data. If a model is non-identifiable, then sometimes the explanations of the learned parameters and corresponding decisions can be provably wrong, because there may exist another parameter that fits the data equally well, yet whose explanation and corresponding decisions are completely different. See Example~\ref{ex:running} for an illustrative example. Additionally, the identifiability of a model is necessary for any algorithm to be consistent or have finite sample complexity.

\noindent{\bf Our Contributions.} In this paper, we provide the first theoretical characterizations of identifiability for mixtures of $k\ge 1$ PLs with features for top-$l$ orders, denoted by $k$-$\plxp$, where $\mx$ is the feature matrix. We note that in $k$-$\plxp$, each agent submits a top-$l$ order for possibly different $l$. 

\noindent{\bf Identifiability.} We provide necessary and sufficient conditions for $\plxp$ and its special case called the bilinear model, denoted by $\plyzp$, to be identifiable in Theorem~\ref{thm:idplx} and {Corollary~\ref{coro:idbilinear}}, respectively. Even though $k$-$\plxp$ is not identifiable for any $k\ge 2$, we provide a sufficient condition in Theorem~\ref{thm:idkplx} for a parameter $k$-$\plxp$ to be identifiable, which leads to the {\em generically identifiability} of $2$-$\plxp$ in Corollary~\ref{coro:idkplx}.  %meaning that the Lebesgue measure of identifiable parameters is $1$, 
It suggests that identifiability may not be a practical concern if the condition is satisfied. We also characterize identifiability of RUMs with features in the appendix.

\noindent{\bf Strict concavity, consistency, and sample complexity of MLE.}  We provide a necessary and sufficient condition for the MLE of $\plxp$ to be strictly concave in Theorem~\ref{thm:log-concavity} and bound on the RMSE of MLE of $\plxp$ in Theorem~\ref{thm:msebound}, which implies the consistency of MLE and a sample complexity bound.
 
Our experiments on synthetic data demonstrate the performance of MLE and the tradeoffs between statistical efficiency and computational efficiency when learning from different top-$l$ preferences for $\plxp$. For $k$-$\plxp$, we propose an EM algorithm and show the prediction power of different configurations of $k$-$\plxp$.

{\noindent\bf Related Work and Discussions.} Table~\ref{tab:related} summarizes related works on PL and its extensions that are close to ours. As discussed above, there is a large literature on PL and its extensions in each of the three dimensions, yet no theoretical result is known even for the special non-mixture case $\plxp$ for top-$l$ orders in general. \cite{Tkachenko16:Plackett} is the only previous work we are aware of that tackles $k$-$\plxp$, which does not provide theoretical guarantees. Even for $\plxp$, we are only aware of another recent paper~\citep{Yildiz20:Fast}, which proposed an ADMM-based algorithm for computing the MLE, but it is unclear whether their algorithm converges to the ground truth because the consistency of MLE was unknown, which is a direct corollary of Theorem~\ref{thm:msebound}. To the best of our knowledge, our identifiability results (Theorems~\ref{thm:idplx}, \ref{thm:idkplx}, Corollaries~\ref{coro:idbilinear}, \ref{coro:idkplx}) and the RMSE bound (Theorem~\ref{thm:msebound}) are the first for ($k$-)$\plxp$ even for linear orders ($l=m-1$).

Mixtures of PLs with features for choice data (top-$1$) are well studied and can be dated back to the classical conditional logit model~\citep{McFadden73:Conditional} and the BLP model~\citep{Berry95:Automobile}. PL with bilinear features, which is a special case of $\plxp$, has been studied in the literature~\citep{Azari13:Preference,Schafer18:Dyad,Zhao18:Cost}. There is a large literature on  standard PL (without features) and its mixture models~\citep{Hunter04:MM,Soufiani13:Generalized,Negahban17:Rank,Maystre15:Fast,Khetan16:Data,Zhao18:Composite,Gormley08:Exploring,Oh14:Learning,Zhao16:Learning,Chierichetti18:Learning,Zhao19:Learning,Liu19:Learning}. Our setting is more general. % or mixtures PL with agent-independent features~\citep{Tkachenko16:Plackett}.

%The identifiability of some PL extensions were studied in the literature. 
Recently, \citet[Proposition 1]{Schafer18:Dyad} provided a sufficient condition for (non-mixture) PL with bilinear features to be identifiable. However, their result is flawed due to the missing conditions on the ranks of feature matrices. See Example~\ref{ex:bilinear} for more details. Our Corollary~\ref{coro:idbilinear} provides a necessary and sufficient condition for the identifiability of more general models. %For mixtures of standard PLs for linear orders, 
\citet{Zhao16:Learning} characterized the conditions on $k$ (the number of components in the mixture model) and $m$ (the number of alternatives) for mixtures of standard PLs to be (non-)identifiable. \citet{Zhao19:Learning} characterized (non-)identifiability of mixtures of PLs (without features) for structured partial orders. \citet{Grun08:Identifiability} characterized conditions for mixtures of multinomial logit models to be identifiable. These results do not subsume our results because the model studied in this paper is more general.

Consistency of MLE was proven for the conditional logit model~\citep{McFadden73:Conditional} assuming that each agent provides multiple choice data. Or equivalently, agents with the same features are repeatedly observed. This assumption may not hold in the big data era, where the feature space can be extremely large and it is unlikely that agents would have identical features. Our proof of the RMSE bound on MLE, which implies consistency, tackles exactly this case and is inspired by the proof of~\citep[Theorem 8]{Khetan16:Data} for standard (non-mixture)  PL. Unlike the standard PL where the Hessian matrix is negative semidefinite with at least one zero eigenvalue~\citep{Khetan16:Data}, the Hessian matrix of the model studied in this paper does not have zero eigenvalues. 
%\lirong{IMPORTANT: what is really different? Is 0-eigenvalue the key, or did you mean ``not negatively definite''? Can you give an example?} 
Due to this difference, the techniques in proving the sample complexity bound in~\citep{Khetan16:Data} cannot be directly extended to our setting. %\lirong{I think it would be good to discuss how the general results in this paper, when applied to special cases, are compared to previous characterizations for special cases}

Due to the space constraint, we focus on Plackett-Luce model in the main paper, while defer results on RUMs to the appendix.

\section{Preliminaries}

Let $\ma=\{a_1, \ldots, a_m\}$ denote the set of $m$ alternatives. Let $\{1, \ldots, n\}$ denote the set of $n$ agents. Given an agent $j$, each alternative is characterized by a column vector of $d\ge 1$ features $\xji\in\mathbb R^d$.  %This is to say, each alternative can have different feature values for different agents. Let $d$ be the dimension of $\xji$ and 
For any $r=1, \ldots, d$, let $\xjir$ denote the $r$th feature of $\xji$. A linear order, which is a transitive, antisymmetric, and total binary relation, is denoted by $R=a_{i_1}\succ\ldots\succ a_{i_m}$, where $a_{i_1}\succ a_{i_2}$ means that the agent prefers $a_{i_1}$ over $a_{i_2}$. Let $\ml(\ma)$ denote the set of all linear orders. A ranked top-$l$ (top-$l$ for short) order has the form $O=a_{i_1}\succ a_{i_2}\ldots\succ a_{i_l}\succ\text{others}$. It is easy to see that a linear order is a special top-$l$ order with $l=m-1$. Let $\mt(\ma)$ denote the set of all top-$l$ orders for all $l\in\{1, 2, \ldots, m-1\}$. An $l$-way order has the form $O=a_{i_1}\succ a_{i_2}\ldots\succ a_{i_l}$. Let $\mi(\ma)$ denote the set of all $l$-way orders for all $l\in\{2, \ldots, m\}$. 

\begin{dfn}[Plackett-Luce model (PL)] 
The parameter space is $\Theta=\mathbb R^m$. The sample space is $\ml(\ma)^n$. Given a parameter $\vec\theta\in\Theta$, the probability of any linear order $R=a_{i_1}\succ a_{i_2}\succ\ldots\succ a_{i_m}$ is 
$
\Pr\nolimits_{\text{PL}}(R|\vec\theta)=\prod^{m-1}_{p=1}\frac {\exp(\theta_{i_p})} {\sum^m_{q=p}\exp(\theta_{i_q})}.$
\end{dfn}

It follows that the marginal probability for any top-$l$ order $R=a_{i_1}\succ a_{i_2}\ldots\succ a_{i_l}\succ\text{others}$ is $\Pr\nolimits_{\text{PL}}(R|\vec\theta)=\prod^{l}_{p=1}\frac {\exp(\theta_{i_p})} {\sum^m_{q=p}\exp(\theta_{i_q})}.$
In the literature, a normalization constraint on $\vec\theta$, e.g.~ $\sum_i\theta_i=0$, is often required to make the model identifiable. In this paper we do put such a constraint since it is more convenient to extend the current definition to PL with features.

Let $X_j=[\vec x_{j1}, \ldots, \vec x_{jm}]\in\mathbb R^{d\times m}$ denote the feature matrix for agent $j$, and let $\mx=[X_1, \ldots, X_n]\in\mathbb R^{d\times mn}$ denote the feature matrix that concatenates the features for all agents. The Plackett-Luce model with features is defined as follows.

\begin{dfn}[Plackett-Luce model with features ($\plx$)]\label{def:fpl} Let $\mx\in\mathbb R^{d\times mn}$ denote a feature matrix. %Given any agent $j$, each alternative $a_i$ is characterized by a $d$-dimensional feature vector $\xji$.
The parameter space is %$\Theta=\{\vec\beta=\{\beta_k|1\le k\le d\}\}$.
$\Theta={\mathbb R}^d$. The sample space is $\ml(\ma)^n$. For any parameter $\vec\beta\in \Theta$, the probability of any linear order $R_j=a_{i_1}\succ a_{i_2}\succ\ldots\succ a_{i_m}$ given by agent $j$ is 
$
\Pr\nolimits_{\plx}(R_j|\vec\beta)=\prod^{m-1}_{p=1}\frac {\exp(\vec\beta\cdot\xjp{i_p})} {\sum^m_{q=p}\exp(\vec\beta\cdot\xjp{i_q})}$.
\end{dfn}

We note that {\em all feature matrices are assumed given, i.e., not part of the parameter of any model in this paper}. PL with bilinear features~\citep{Azari13:Preference} is a special case of $\plx$, where each agent $j\in\{1, \ldots, n\}$ is characterized by a column feature vector $\vec y_j\in {\mathbb R}^L$ and each alternative $a_i\in\ma$ is characterized by a column feature vector $\vec z_i\in {\mathbb R}^K$. We note that for any $i\in\{1, \ldots, m\}$, $\vec z_i$ is the same across all agents. Let $Y=[\vec y_1, \ldots, \vec y_n]\in\mathbb R^{L\times n}$ denote the agent feature matrix and $Z=[\vec z_1, \ldots, \vec z_m]\in\mathbb R^{K\times m}$ denote the alternative feature matrix. PL with bilinear features is defined as follows.

\begin{dfn}[Plackett-Luce model with bilinear features $\plyz$]\label{def:bfpl}  Let $Y\in\mathbb R^{L\times n}$ denote an agent feature matrix and let $Z\in\mathbb R^{K\times m}$ denote an alternative feature matrix.  
%Each alternative $a_i$ ($i\in\{1, \ldots, m\}$) is characterized by a feature vector $\vec z_i\in\mathbb R^K$; each agent $j$ is characterized by a feature vector $\vec y_j\in\mathbb R^L$. 
The parameter space consists of matrices $\Theta=\mathbb R^{K\times L}$. The sample space is $\ml(\ma)^n$. Given a parameter $B\in \Theta$, the probability of any linear order $R_j=a_{i_1}\succ a_{i_2}\succ\ldots\succ a_{i_m}$ given by agent $j$ is 
$
\Pr\nolimits_{\plyz}(R_j|B)=\prod^{m-1}_{p=1}\frac {\exp(\vec z^\top_{i_p} B\vec y_j)} {\sum^m_{q=p}\exp(\vec z^\top_{i_q} B\vec y_j)}.$
\end{dfn}

$\plyz$ can be viewed as a special case of $\plx$ by letting $\mx=Y\otimes Z$ and vectorizing $B$ accordingly. Before defining mixtures of PL with features, we recall the definition of mixtures of PL as follows.

\begin{dfn}[$k$-PL]\label{def:mpl}
For any  $k\geq 1$, the mixture of $k$ Plackett-Luce models is defined as follows. The sample space is $\mathcal L(\ma)^n$. The parameter space has two parts. The first part is the {\em mixing coefficients} $\vec\alpha = \left(\alpha_1,\cdots,\alpha_k\right)$ with $\vec \alpha\ge 0$ and $\vec\alpha\cdot\vec 1 =1$. The second part is $\left(\vec\theta^{(1)},\cdots,\vec\theta^{(k)}\right)$, where $\vec\theta^{(r)}\in\Theta$ is the parameter of the $r$-th PL component. The probability of any linear order $R$ is
$\Pr\nolimits_{k\text{-PL}}(R|\vec\theta) = \sum^k_{r=1}\alpha_r\Pr\nolimits_{\text{PL}}(R|\vec\theta^{(r)}).$
\end{dfn}

The mixture of $k$ Plackett-Luce models with features is therefore defined as follows.

\begin{dfn}[Mixtures of $k$ Plackett-Luce models with features ($k$-$\plx$)] Let $\mx\in\mathbb R^{d\times mn}$ denote the feature matrix. %Given any agent $j$, each alternative $a_i$ is characterized by a $d$-dimensional feature vector $\vec x_{ji}$.
The parameter space $\Theta$ has two parts. The first part is the vector of mixing coefficients $\vec\alpha=(\alpha_1, \alpha_2, \ldots, \alpha_k)$ and the second part is $(\vec\beta^{(1)}, \vec\beta^{(2)}, \ldots, \vec\beta^{(k)})$, where for $r=1, \ldots, k$, $\vec\beta^{(r)}=\{\beta^{(r)}_i|1\le i\le d\}\}$. The sample space is $\ml(\ma)^n$. Given a parameter $\vec\theta\in \Theta$, the probability of any linear order $R_j=[a_{i_1}\succ a_{i_2}\succ\ldots\succ a_{i_m}]$ given by agent $j$ is 
$
\Pr\nolimits_{\text{$k$-PL}(\mx)}(R_j|\vec\theta)=\sum^k_{r=1}\alpha_r\Pr\nolimits_{\plx}(R_j|\vec\beta^{(r)})$.
\end{dfn}

$\plx$ can be viewed as a special case of $k$-$\plx$ where $k=1$. We recall the definition of identifiability of statistical models as follows.

\begin{dfn}[Identifiability]\label{def:id}
Let $\mathcal{M}=\{\Pr(\cdot|\vec{\theta}): \vec{\theta}\in\Theta\}$ be a statistical model, where $\Theta$ is the parameter space and $\Pr(\cdot|\vec{\theta})$ is the distribution over the sample space associated with $\vec{\theta}\in \Theta$. We say that a parameter $\vec\theta\in\Theta$ is {\em identifiable} in $\mm$, if for any $\vec\gamma\in\Theta$ with $\vec\gamma\ne\vec\theta$, we have $\Pr(\cdot|\vec{\theta})\ne \Pr(\cdot|\vec{\gamma})$.  $\mathcal{M}$ is {\em identifiable} if all its parameters are identifiable.
%for all $\vec{\theta}, \vec{\gamma}\in\Theta$, we have $
%\Pr(\cdot|\vec{\theta})=\Pr(\cdot|\vec{\gamma})\Longrightarrow\vec{\theta}=\vec{\gamma}.$
\end{dfn}
The following example shows that non-identifiability of a model can lead to unavoidable untrustworthy interpretations and decisions. 

\begin{ex}
\label{ex:running}\rm
Suppose an automobile manufacturer is using $\plyz$ to learn consumers' preferences over car models. For simplicity suppose there are two agents \{1, 2\} and two alternatives (car models) $\{a_1, a_2\}$. 
$Y=[y_1, y_2] = [0.5, 1]$, where each agent is represented by her normalized income ($0.5$ for the first agent and $1$ for the second agent).  $Z = [\vec z_1, \vec z_2] = \begin{bmatrix}0.6 & 1\\0.2 & 0.5\end{bmatrix}$, where each car is represented by its normalized price ($0.6$ for the first car and $1$ for the second car) and its normalized miles per gallon ($0.2$ for the first car and $0.5$ for the second car). 

Let $B=[-1, 8/3]^\top$ and  $B'=[1, 0]^\top$. We show that $B$ and $B'$ correspond to exactly the same distribution over the two agents' preferences. In fact, it is not hard to verify that  $\vec z^\top_1By_1-\vec z^\top_2By_1=z^\top_1B'y_1-\vec z^\top_2B'y_1=-0.2$ and $\vec z^\top_1By_2-\vec z^\top_2By_2=\vec z^\top_1B'y_2-\vec z^\top_2B'y_2=-0.4$.  Therefore, $\Pr\nolimits_{\plyz}(R_1=a_1\succ a_2|B)=\frac 1 {1+\exp(\vec z^\top_2By_1-\vec z^\top_1By_1)}=\frac 1 {1+\exp(\vec z^\top_2B'y_1-\vec z^\top_1B'y_1)}=\Pr\nolimits_{\plyz}(R_1=a_1\succ a_2|B')$. Other probabilities can be calculated similarly. 

Therefore, it is impossible for any statistical method to distinguish $B$ from $B'$. This may not be a big concern if the company uses the learned model to predict the preferences of new customers, as both $B$ and $B'$ would give the same prediction. However, the first components of $B$ and $B'$ have opposite interpretations. The first component of $B$ being positive is often interpreted as the existence of a negative correlation between an agent's income and the car's price, i.e.~richer people prefer cheaper cars. The interpretation of the first component of $B'$ is on the opposite. This makes the interpretation and any decision based on it untrustworthy.  \hfill$\Box$
\end{ex}

\begin{figure}[htp]
    \centering
    \includegraphics[width = 0.45\textwidth]{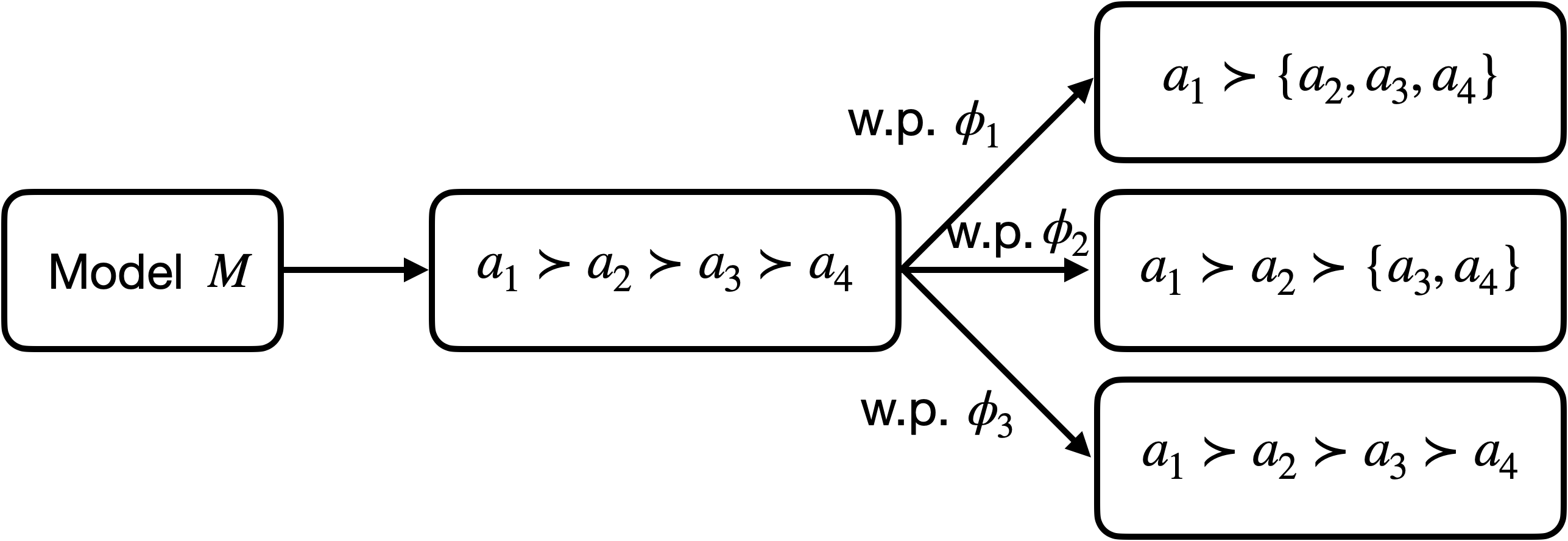}
    \caption{Illustration of model M-TO with the parameter $\vec\phi$, where M is a model that generates linear orders. M-TO allows users to report top-$l$ orders of different $l$'s.} 
    \label{fig:phi_l}
\end{figure}

\section{Models and Their Identifiability for Top-$l$ Orders}

Modeling and learning from different partial orders is desirable in the scenarios where users provide different structures of partial orders~\citep{Zhao19:Learning}. Following~\citet{Zhao19:Learning}, we extend $\plx$ to a model that generates top-$l$ partial orders by introducing a new parameter $\vec\phi=(\phi_1, \ldots, \phi_{m-1})$, where $\sum^{m-1}_{i=1}\phi_i=1$. $\vec\phi$ can be viewed as a distribution over $\{1, 2, \ldots, m-1\}$ that represents the probability for the agent to report a top-$l$ order, where $1\le l\le m-1$. %a distribution over $l\in\{1, 2, \ldots, m-1\}$. %, $\phi_{l}\ge 0$ and $\sum^{m-1}_{l=1}\phi_{l}=1$. Slightly abusing the notation, we write $\vec\phi=[\phi_{1}, \phi(2), \ldots, \phi_{m-1}]$.\lirong{why not just $\phi$? You can also use $\Delta(m-1)$ to denote the $m-1$ dimensional probability simplex. This may simplify notation later.} Then a ranking model 
Any model defined in the previous section can be  extended to a model that generates a top-$l$ order $R$ with probability $\Pr\nolimits_{\mm\text{-TO}}(R|\vec\theta, \vec\phi)=\phi_{l}\Pr\nolimits_{\mm}(R|\vec\theta)$, as is illustrated in Figure~\ref{fig:phi_l}. For example, $\plx$ can be extended to $\plxp$, formally defined as follows.

\begin{figure}[htp]
    \centering
    \includegraphics[width = 0.4\textwidth]{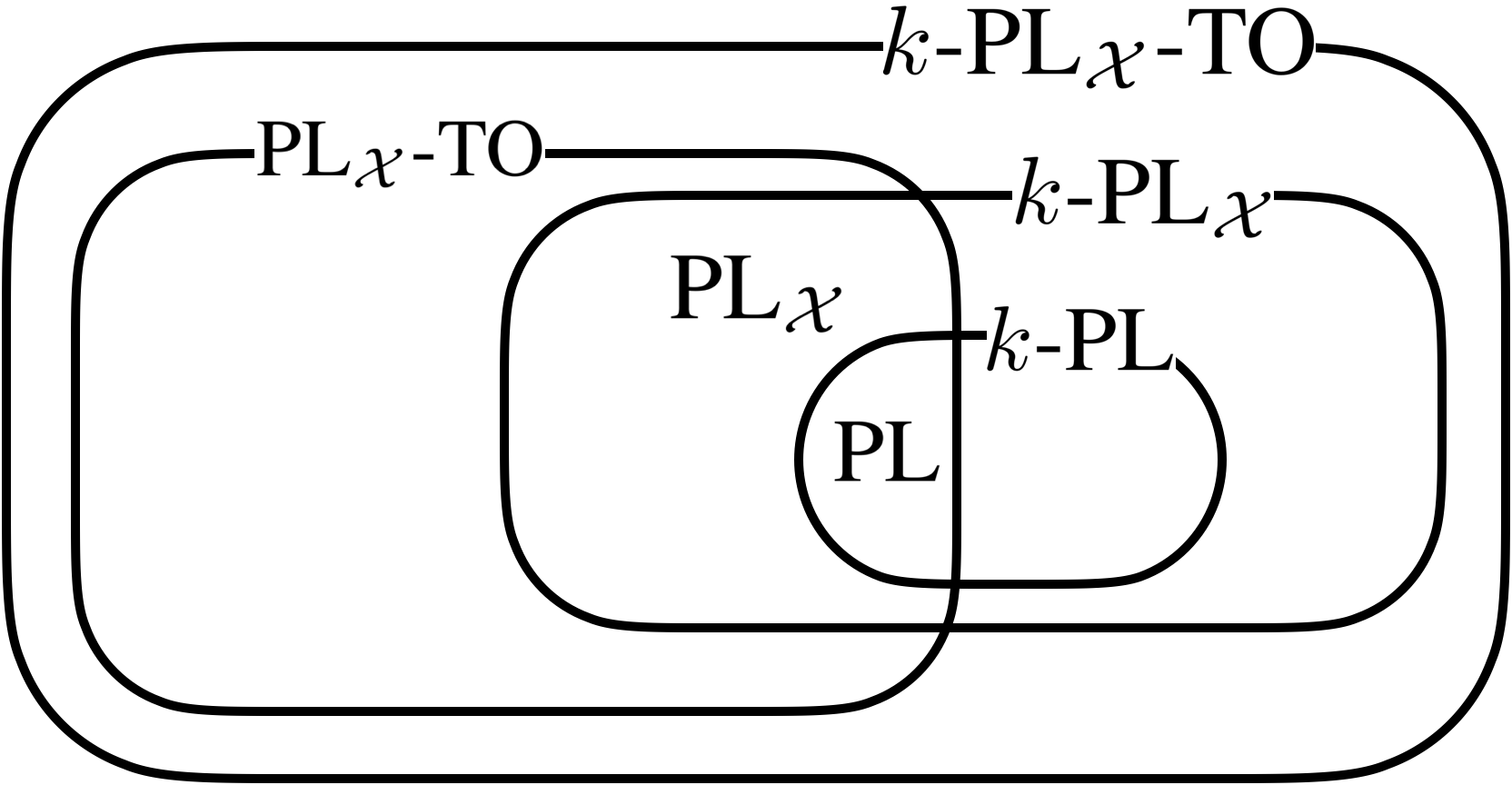}
    \caption{Relations between models in this paper in Venn diagram. $k$-$\plxp$ is the most general model, subsuming all the other models. The two largest submodels of $k$-$\plxp$ are $\plxp$ and $k$-$\plx$, whose intersection is $\plx$. PL is the smallest model in this diagram, lying at the intersection of $\plxp$ and $k$-PL.} 
    \label{fig:model_relations}
\end{figure}

\begin{dfn}[$\plxp$]\label{def:plxphi}  Let $\mx\in\mathbb R^{d\times mn}$ denote the feature matrix. The parameter space is $\Theta={\mathbb R}^d\times\{ \vec\phi\in {\mathbb R}_{\ge 0}^{m-1}: \vec\phi\cdot\vec 1 =1 \}$. The sample space is $\mt(\ma)^n$. Given a parameter $(\vec\beta, \vec\phi)$, the probability of any top-$l$ order $O_j=a_{i_1}\succ a_{i_2}\succ\ldots\succ a_{i_l}\succ\text{others}$ given by agent $j$ is 
$
\Pr\nolimits_{\plxp}(O_j|\vec\beta, \vec\phi)=\phi_{l}\Pr\nolimits_{\plx}(O_j|\vec\beta)
$, where $\Pr_{\plx}(O_j|\vec\beta)$ is the marginal probability of $O_j$ under $\plx$ given $\vec\beta$.
\end{dfn}
%\lirong{I am confused. Is $\vec\phi$ part of the parameter space? If so why make it part of the model name? Also the definition of the parameter space seems wrong. Should it be the Catersian product of the beta part and the phi part?}

Again, $\mx$ is assumed given and not part of the parameter of $\plxp$. $\plx$ is a submodel of $\plxp$ (where  $\phi_{m-1}=1$). %\lirong{also discuss relationship to conditional logit: $\phi_{1}=1$} 
$\plyzp$ and $k$-$\plxp$ can be defined similarly, see Appendix A %~\ref{sec:dfn} 
for their formal definitions. The relations between different models mentioned in this paper are shown in the Venn diagram in Figure~\ref{fig:model_relations} ($\plyz$ and $\plyzp$ are omitted for simplicity).

As was illustrated in Example~\ref{ex:running}, identifiability is important if one wants to interpret the learned parameter. For the rest of this section, we focus on identifiability of $\plxp$ and $k$-$\plxp$.

To characterize the identifiability of PL extensions with features, for each $j\le n$, we first define agent $j$'s {\em normalized} feature matrix, denoted by $\norm(X_j)$.
\begin{equation}\label{eq:norm}
\norm(X_j)=[\vec x_{j2}-\vec x_{j1}, 
\vec x_{j3}-\vec x_{j1}, 
\ldots, 
\vec x_{jm}-\vec x_{j1}].
\end{equation} 

We then define 
the $d$-by-$(m-1)n$ {\em normalized} feature matrix, denoted by  $\bignorm(\mx)$, as follows.
\begin{equation}\label{eq:xprime}
\bignorm(\mx)=[\norm(X_1), \norm(X_2), \ldots, \norm(X_n)]
\end{equation}
In words, $\mx$ is normalized by using the feature vector of $a_1$ as the baseline.  Our results still hold if another alternative is used as the baseline.
%\lirong{give a concrete numerical example in the running example.}
We now  present our first identifiability theorem.
%The rank of $\bignorm(\mx)$ characterizes identifiability of PL with features. The following theorem shows that the necessary and sufficient condition for $\plx$ to be identifiable is that $\bignorm(\mx)$ has full row rank.

\begin{thm}\label{thm:idplx}
For any $\mx$, 
%Given any $d$-by-$mn$ feature matrix $\mx$,
$\plxp$ is identifiable if and only if $\bignorm(\mx)$ has full row rank. 
\end{thm}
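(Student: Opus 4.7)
The plan is to reduce the identifiability question to a linear-algebraic one, exploiting the fact that $\plx$ probabilities are shift-invariant: for each agent $j$, replacing $\vec\beta\cdot\vec x_{ji}$ by $\vec\beta\cdot\vec x_{ji}+c_j$ for an agent-specific constant $c_j$ leaves every PL factor unchanged, because the same $\exp(c_j)$ appears in numerator and denominator.

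For the easy direction ($\Rightarrow$, necessity), I would suppose $\bignorm(\mx)$ does not have full row rank. Then there exists $\vec\gamma\neq\vec 0$ with $\vec\gamma^\top \bignorm(\mx)=\vec 0$, i.e., $\vec\gamma\cdot(\vec x_{ji}-\vec x_{j1})=0$ for every $i,j$. Hence $\vec\gamma\cdot\vec x_{ji}=\vec\gamma\cdot\vec x_{j1}$ is constant in $i$, so by the shift-invariance observation above, the parameters $(\vec\beta,\vec\phi)$ and $(\vec\beta+\vec\gamma,\vec\phi)$ induce identical distributions on $\mt(\ma)$, witnessing non-identifiability.

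For the harder direction ($\Leftarrow$, sufficiency), assume $\bignorm(\mx)$ has full row rank $d$ and suppose $(\vec\beta,\vec\phi)$ and $(\vec\beta',\vec\phi')$ yield the same distribution for every agent. First, I would identify $\vec\phi$ from the marginals: the total probability that agent $j$'s report has length exactly $l$ is $\phi_l\sum_{O \text{ top-}l}\Pr_{\plx}(O\mid\vec\beta)=\phi_l$, so $\vec\phi=\vec\phi'$. Next, pick any $l$ with $\phi_l>0$ (one exists because $\vec\phi\cdot\vec 1=1$); for this $l$ the conditional distributions on top-$l$ orders under the two parameters coincide, i.e.\ $\Pr_{\plx}(O\mid\vec\beta)=\Pr_{\plx}(O\mid\vec\beta')$ for every top-$l$ order $O$ and every agent $j$. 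Marginalising to the top-ranked alternative yields
\[
\frac{\exp(\vec\beta\cdot\vec x_{ji})}{\sum_{q}\exp(\vec\beta\cdot\vec x_{jq})}=\frac{\exp(\vec\beta'\cdot\vec x_{ji})}{\sum_{q}\exp(\vec\beta'\cdot\vec x_{jq})}
\]
for every $i,j$. Taking ratios between $a_i$ and $a_1$ cancels the denominators and the logarithm gives $(\vec\beta-\vec\beta')\cdot(\vec x_{ji}-\vec x_{j1})=0$ for all $i\geq 2$ and all $j$. Concatenating these equations over $i$ and $j$ is exactly $(\vec\beta-\vec\beta')^\top\bignorm(\mx)=\vec 0$, and the full-row-rank hypothesis forces $\vec\beta=\vec\beta'$.

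The main obstacle, modest though it is, sits in the sufficiency direction: one has to be careful that $\vec\phi$ can have zero entries, so the argument cannot assume that we observe top-$l$ orders for every $l$. The remedy is to use only the (always-available) $l$ with $\phi_l>0$ and pass to the top-$1$ marginal inside that conditional distribution, since the top-$1$ marginal of the PL law on top-$l$ orders is the usual PL choice probability and this is what couples linearly with $\bignorm(\mx)$. The rest is bookkeeping.
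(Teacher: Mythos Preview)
Your proposal is correct and follows essentially the same approach as the paper: separate out $\vec\phi$ via the length marginals, reduce to top-$1$ choice probabilities, take ratios to obtain the linear system $(\vec\beta-\vec\beta')^\top\bignorm(\mx)=\vec 0$, and invoke the rank condition. Your necessity argument via shift-invariance is slightly more direct than the paper's (which routes through an auxiliary rank lemma), and your sufficiency argument is a bit more explicit about picking an $l$ with $\phi_l>0$ and marginalising down to the top choice, whereas the paper asserts without detail that identifiability under $\phi_1=1$ implies identifiability for general $\vec\phi$; but these are presentational refinements rather than a different route.
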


%This theorem is proven by showing that the $\vec\phi$ part is identifiable and the $\vec\beta$ part is identifiable if and only if $\bignorm(\mx)$ has full row rank. 
The full proof can be found in Appendix G.2.%~\ref{sec:proofthmidplx}.

%\begin{proof}

%It is not hard to see that a similar theorem holds for $\plx$ as well. Therefore, we immediately have the following corollary. %An immediate corollary is that if $\plxp$ is not identifiable, $k$-$\plx$ is not identifiable either for any $k\ge 1$, for the reason that each component of $k$-$\plx$ is not identifiable. This corollary is formally stated as follows.

%\begin{coro}
%Let $\mx$ denote a $d$-by-$mn$ feature matrix without full row rank.  
%For any $k\ge 1$, $k$-$\plx$ is non-identifiable for any feature matrix $\mx$ such that $\bignorm(X)$ does not have full row rank.
%\end{coro}

\begin{ex}\label{ex:plx}
Consider a $\plx$ (a special case of $\plxp$), whose feature matrix $\mx$ has three rows $\vec r_1$, $\vec r_2$, and $\vec r_3$, where $\vec r_1+\vec r_2 = \vec r_3$. Therefore, $\bignorm(X)$ does not have full row rank. Let $\vec\beta = [\beta_1, \beta_2, \beta_3]^\top$ be the ground truth parameter. We construct $\vec\beta'=[\beta_1+\beta_3, \beta_2+\beta_3, 0]^\top$. Then it is easy to see $\vec\beta^\top\cdot\mx=\vec\beta'^\top\cdot\mx$, which further means for any order $R$, we have $\Pr\nolimits_{\plx}(R|\vec\beta)=\Pr\nolimits_{\plx}(R|\vec\beta')$ by Definition~\ref{def:fpl}. This means this $\plx$ is not identifiable.
\end{ex}

Theorem~\ref{thm:idplx} can be applied to characterize the identifiability for PL with bilinear features as in the following corollary. %The next theorem provides necessary and sufficient conditions on $Y$ and $Z$ for $\plyzp$ to be identifiable, followed by an example where the model is not identifiable.

\begin{coro}\label{coro:idbilinear}
For any model $\plyzp$, where $Y\in\mathbb R^{L\times n}$ and $Z\in\mathbb R^{K\times m}$, $\plyzp$ is identifiable if and only if both $Y$ and $\norm(Z)$ have full row rank.
%$K=\rank(\norm(Z))$ and $L=\rank(Y)$.
\end{coro}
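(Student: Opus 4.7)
The plan is to realize $\plyzp$ as an instance of $\plxp$ and then apply Theorem~\ref{thm:idplx}. The identity $\vec z_i^\top B\vec y_j=(\vec y_j\otimes\vec z_i)^\top\mathrm{vec}(B)$ shows that the bilinear model with parameter $(B,\vec\phi)$ is identical, as a distribution over top-$l$ preference profiles, to the $\plxp$ model with $\vec x_{ji}=\vec y_j\otimes\vec z_i$ and $(\vec\beta,\vec\phi)=(\mathrm{vec}(B),\vec\phi)$. Since $B\mapsto\mathrm{vec}(B)$ is a bijection between the parameter spaces, the two models are identifiable simultaneously, so by Theorem~\ref{thm:idplx} it suffices to prove that the resulting $\bignorm(\mx)$ has full row rank if and only if both $Y$ and $\norm(Z)$ do.

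Next, the mixed-product property of Kronecker products gives $\vec x_{ji}-\vec x_{j1}=\vec y_j\otimes(\vec z_i-\vec z_1)$, so
\[
\norm(X_j)=\vec y_j\otimes\norm(Z),\qquad \bignorm(\mx)=[\vec y_1\otimes\norm(Z),\ \ldots,\ \vec y_n\otimes\norm(Z)].
\]
I would analyse the rank of this $KL\times(m-1)n$ matrix through its left kernel. Writing any $\vec\beta=\mathrm{vec}(B)$, membership in the left kernel amounts to $(\vec z_i-\vec z_1)^\top B\vec y_j=0$ for every $i$ and $j$, i.e.\ $\norm(Z)^\top(B\vec y_j)=\vec 0$ for every $j$. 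If $\norm(Z)$ has full row rank $K$, then $\norm(Z)^\top$ is injective, forcing $B\vec y_j=\vec 0$ for all $j$; if in addition $Y$ has full row rank $L$ its columns span $\mathbb R^L$, forcing $B=0$. Conversely, any nonzero $\vec u$ with $Y^\top\vec u=\vec 0$ lets me take $B=\vec v\vec u^\top$ for any nonzero $\vec v\in\mathbb R^K$, giving a nonzero element of the left kernel, and by symmetry a nonzero left null vector of $\norm(Z)$ produces one as well.

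The main thing to be careful about is fixing the vectorization convention so that $\mathrm{vec}(B)^\top(\vec y\otimes\vec z)=\vec z^\top B\vec y$; once that is settled, the two directions above are essentially one-line computations. A slicker but more bookkeeping-heavy alternative would invoke $\mathrm{rank}(A\otimes B)=\mathrm{rank}(A)\cdot\mathrm{rank}(B)$ after a column permutation that turns $\bignorm(\mx)$ into $\norm(Z)\otimes Y$; I prefer the left-kernel route because it avoids writing down an explicit permutation matrix.
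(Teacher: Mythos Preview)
Your reduction is exactly the paper's: realize $\plyzp$ as $\plxp$ via $\vec x_{ji}=\vec y_j\otimes\vec z_i$ and $\vec\beta=\mathrm{vec}(B)$, compute $\bignorm(\mx)$, and apply Theorem~\ref{thm:idplx}. The only difference is the last step. The paper observes that with this convention $\bignorm(\mx)=Y\otimes\norm(Z)$ on the nose (no column permutation needed---your side remark about having to permute to $\norm(Z)\otimes Y$ is off), and then simply quotes the identity $\rank(A\otimes B)=\rank(A)\cdot\rank(B)$ from \cite{Roger94:Topics} to conclude that $\rank(\bignorm(\mx))=KL$ iff $\rank(Y)=L$ and $\rank(\norm(Z))=K$. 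Your left-kernel argument is a correct, self-contained reproof of that identity in the special case at hand; it buys independence from the external reference at the cost of a few extra lines, while the paper's route is a one-liner once the Kronecker structure of $\bignorm(\mx)$ is recognized.
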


%Corollary~\ref{coro:idbilinear} is proved by converting $\plyzp$ to $\plxp$ with $\mx=Y\otimes Z$. 
The formal proof can be found in Appendix G.3.%~\ref{sec:proof:thm:idbilinear}.  %\cite[Proposition 1]{Schafer18:Dyad} provided a sufficient condition for $\plyzp$ to be identifiable, which does not involve discussions on the ranks of $Z$ or $Y$. 
%In fact, the $L=\rank(Y)$ part identified by Theorem~\ref{thm:idbilinear} provides necessary conditions on the ranks of $Y$ and $Z$, which is missing in . 

%{\color{red} 
The full row rank condition in Theorem~\ref{thm:idplx} and Corollary~\ref{coro:idbilinear} is mild as long as $n$ is not too small, which is the case in many real-world applications. For example, the full row rank condition holds on the sushi dataset used in our real-world experiment. In Appendix F.4, %~\ref{app:rank}, 
we show that the probability of violating the full row rank condition becomes very small when sampling $n=10$ agents from the sushi dataset, and this probability decays exponentially as $n$ increases.%}

In the next example, we show that the sufficient conditions for $\plyzp$ to be identifiable by~\citet[Proposition 1]{Schafer18:Dyad} is unfortunately flawed.

\begin{ex}
\label{ex:bilinear}
Continuing Example~\ref{ex:running}, it is not hard to verify that no agent feature or alternative feature is a constant, which implies that $\plyz$ is identifiable according to~\citep[Proposition 1]{Schafer18:Dyad}. However, as we showed in Example~\ref{ex:running}, $\plyz$ is not identifiable. It's easy to see that $\norm(Z)$ has two rows but only one column, which does not have full row rank.
\end{ex}

Identifiability of mixtures of PLs with features is at least as challenging as the identifiability of mixtures of standard PLs, which is still an open problem for any $k\ge 3$. In the following theorem, we provide a sufficient condition for a parameter in $k$-$\plxp$ to be identifiable.

%$k$-$\plx$ to be identifiable if the corresponding $k$-PL is identifiable. The Theorem is formally stated as follows.

%\begin{thm}\label{thm:idkplx}
%Given any $k$-$\plxp$ over $m$ alternatives and $n$ agents, where $\mx$ has $d$ rows. $k$-$\plx$ is identifiable modulo label switching if $k$-PL-$\vec\phi$ over $m$ alternatives is identifiable modulo label switching and $d=\rank(\bignorm(\mx))$.
%\end{thm}

\begin{thm}\label{thm:idkplx}
If $k$-PL is identifiable, then for any $\mx$ such that $\bignorm(\mx)$ has full row rank,  any parameter $(\vec\alpha, \vec\beta, \vec\phi)$ with $\phi_{m-1}>0$ is identifiable in $k$-$\plxp$. 
\end{thm}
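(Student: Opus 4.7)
The plan is to reduce identifiability of a parameter in $k$-$\plxp$ to identifiability of $k$-PL applied agent-by-agent, and then to use the full-row-rank hypothesis on $\bignorm(\mx)$ as a purely linear-algebraic step to recover $\vec\beta$.

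Suppose $(\vec\alpha,\vec\beta,\vec\phi)$ and $(\vec\alpha',\vec\beta',\vec\phi')$ induce the same distribution on $\mt(\ma)^n$. First I would peel off $\vec\phi$: for each length $l\in\{1,\dots,m-1\}$ and each agent $j$, summing the probability of an observation of $j$ over all top-$l$ orders gives $\phi_l$ exactly, since the $k$-$\plx$ conditional distribution over top-$l$ orders of fixed $l$ integrates to one. Hence $\vec\phi=\vec\phi'$. Next, because $\phi_{m-1}>0$, I would condition on the event ``agent $j$ reports a linear order'' (probability $\phi_{m-1}$), and obtain that $(\vec\alpha,\vec\beta)$ and $(\vec\alpha',\vec\beta')$ induce the same $k$-$\plx$ distribution on $\ml(\ma)^n$.

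Second, for each agent $j$, the marginal distribution on $\ml(\ma)$ is precisely a $k$-PL on $m$ alternatives with mixing coefficients $\vec\alpha$ and component utility vectors $\vec\theta^{(r)}_j=(\vec\beta^{(r)})^\top X_j$, for $r=1,\ldots,k$. Invoking the hypothesized identifiability of $k$-PL, I get $\vec\alpha=\vec\alpha'$ (fixing a canonical labeling of components) and, for each $r$, $(\vec\beta^{(r)})^\top X_j$ agrees with $(\vec\beta'^{(r)})^\top X_j$ up to the additive shift $c_{j,r}\vec 1$ that is the standard PL shift invariance. Subtracting the $i=1$ column from the $i$th column for each $i\ge 2$ eliminates $c_{j,r}$ and yields
\[
\bigl(\vec\beta^{(r)}-\vec\beta'^{(r)}\bigr)^\top \norm(X_j)=\vec 0.
\]
Concatenating these relations over all $n$ agents gives $(\vec\beta^{(r)}-\vec\beta'^{(r)})^\top\bignorm(\mx)=\vec 0$, and since $\bignorm(\mx)$ has full row rank by assumption, we conclude $\vec\beta^{(r)}=\vec\beta'^{(r)}$ for every $r$. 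Combining with $\vec\alpha=\vec\alpha'$ and $\vec\phi=\vec\phi'$ completes the proof.

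The main subtlety I expect is handling the permutation symmetry inherent in mixture labels: $k$-PL identifiability is typically understood up to reordering of components, and in principle the permutation aligning $\vec\alpha$ with $\vec\alpha'$ on agent $j_1$ need not coincide with the one on agent $j_2$ if some mixing coefficients are equal. The clean resolution is to fix a single canonical labeling globally (possible because $\vec\alpha$ is shared across all agents), apply it once, and then carry it through the linear-algebra step. This global labeling argument, rather than the matrix computation, is where the care is needed; the rest is a direct application of the assumptions and of Theorem~\ref{thm:idplx}'s rank criterion.
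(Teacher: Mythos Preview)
Your argument mirrors the paper's almost exactly: peel off $\vec\phi$, restrict to linear orders via $\phi_{m-1}>0$, invoke $k$-PL identifiability agent-by-agent to match mixing weights and component utilities up to shift, and then use the full row rank of $\bignorm(\mx)$ to force $\vec\beta^{(r)}=\vec\beta'^{(r)}$. The label-switching subtlety you flag is genuine, and the paper treats it at exactly the same level you do---it fixes one global relabeling so that $\vec\alpha=\vec\alpha'$ and then proceeds without a separate argument that the per-agent permutations must coincide when some mixing coefficients are equal.
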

The proof is done by contradiction. The full proof is provided in Appendix G.4.%~\ref{sec:proofidmixture}.
%Suppose $(\vec\alpha, \vec\beta, \vec\phi)$ with $\phi_{m-1}>0$ is not identifiable in $k$-$\plxp$ but $\bignorm(\mx)$ has full row rank. Then, we can construct a non-identifiable parameter for $k$-PL, which contradicts the assumption that $k$-PL is identifiable.  The formal proof can be found in Appendix~\ref{sec:proofidmixture}. Notice that the parameters with $\phi_{m-1}=0$ has $0$ Lebesgue measure. Combining Theorem~\ref{thm:idkplx} and~\citep[Theorem 2]{Zhao16:Learning}, we have the following corollary.

\begin{coro}
\label{coro:idkplx}
For any $\mx$ such that $\bignorm(\mx)$ has full row rank, $2$-$\plxp$ over four or more alternatives is identifiable modulo label switching.
\end{coro}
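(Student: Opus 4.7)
The plan is to derive Corollary~\ref{coro:idkplx} as an immediate specialization of Theorem~\ref{thm:idkplx}, using a known identifiability result for standard $2$-PL as the missing hypothesis. Theorem~\ref{thm:idkplx} states that whenever $k$-PL (without features) is identifiable and $\bignorm(\mx)$ has full row rank, every parameter $(\vec\alpha,\vec\beta,\vec\phi)$ in $k$-$\plxp$ with $\phi_{m-1}>0$ is identifiable. Setting $k=2$, I would invoke \citet{Zhao16:Learning}, who proved that mixtures of two standard PLs over $m\ge 4$ alternatives are identifiable modulo label switching. Combining the two gives identifiability modulo label switching of every $2$-$\plxp$ parameter with $\phi_{m-1}>0$ whenever $\bignorm(\mx)$ has full row rank, which is the content of the corollary.

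Two qualifications in the statement account for the remaining looseness. The ``modulo label switching'' clause is unavoidable because swapping the two mixture components $(\alpha_1,\vec\beta^{(1)})\leftrightarrow(\alpha_2,\vec\beta^{(2)})$ trivially preserves the induced distribution over preferences, so strict identifiability never holds for any mixture model. The ``generically'' language used in the introduction when referencing this corollary refers to the fact that the set of parameters excluded by the condition $\phi_{m-1}>0$, namely $\{\vec\phi:\phi_{m-1}=0\}$, has zero Lebesgue measure inside the simplex $\{\vec\phi\in\mathbb R^{m-1}_{\ge 0}:\vec\phi\cdot\vec 1=1\}$, so identifiability holds on a full-measure subset of the parameter space.

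The only subtlety I would want to check carefully, and in my view the main potential obstacle, is that the invocation of Theorem~\ref{thm:idkplx} correctly propagates ``identifiable modulo label switching'' on the $k$-PL side to ``identifiable modulo label switching'' on the $k$-$\plxp$ side, with no additional ambiguity introduced. Since the proof of Theorem~\ref{thm:idkplx} (Appendix G.4) proceeds by marginalizing at a fixed $(\vec\beta,\vec\phi)$ over full linear orders (this is where $\phi_{m-1}>0$ is used) and then matching two induced $2$-PL distributions, the known identifiability of $2$-PL forces the induced component utilities to agree up to permutation of indices $\{1,2\}$; the full-row-rank hypothesis on $\bignorm(\mx)$ combined with Theorem~\ref{thm:idplx} then uniquely pins down each $\vec\beta^{(r)}$ from its induced PL parameter. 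Hence the label-switching ambiguity inherited from $2$-PL is exactly the label-switching ambiguity on $(\vec\alpha,\vec\beta)$ and no further ambiguity arises, completing the corollary.
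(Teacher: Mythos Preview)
Your proposal is correct and matches the paper's intended derivation: the corollary is presented without its own proof and is meant to follow immediately from Theorem~\ref{thm:idkplx} together with the identifiability of standard $2$-PL over $m\ge 4$ alternatives from \citet{Zhao16:Learning}, exactly as you outline. Your additional remarks on why ``modulo label switching'' and the generic restriction $\phi_{m-1}>0$ appear are accurate and in fact more explicit than anything the paper writes.
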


%The above corollary is true due to \cite[Theorem 2]{Zhao16:Learning} and that the Lebesgue measure of the parameter where $\phi_{m-1}=0$ is zero.

%\lirong{Can we provide a more general generic id results for arbitrary $k$ by using the ICML paper's generic id results on k-PL?}

\iffalse
\begin{thm}\label{thm:idkplx}
Given $\mx$, $k$-$\plxp$ is generically identifiable, if $\bignorm(\mx)$ has full row rank and $k$-PL over $m$ alternatives is generically identifiable.
\end{thm}
\begin{coro}
plug in our ICML-16 paper's result on generic identifiability.
\end{coro}
\fi

\begin{figure*}[htp]
    \centering
    \includegraphics[width = 0.45\textwidth]{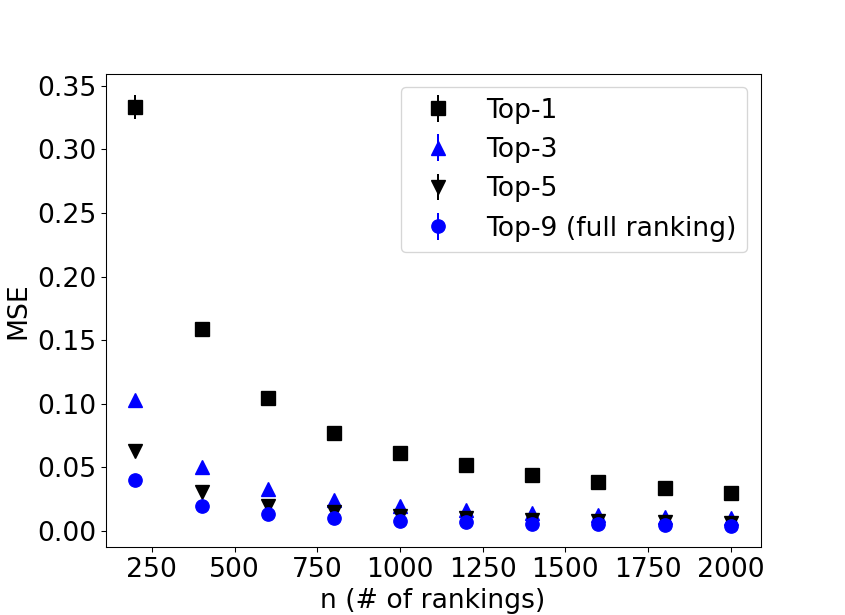}\includegraphics[width=0.45\textwidth]{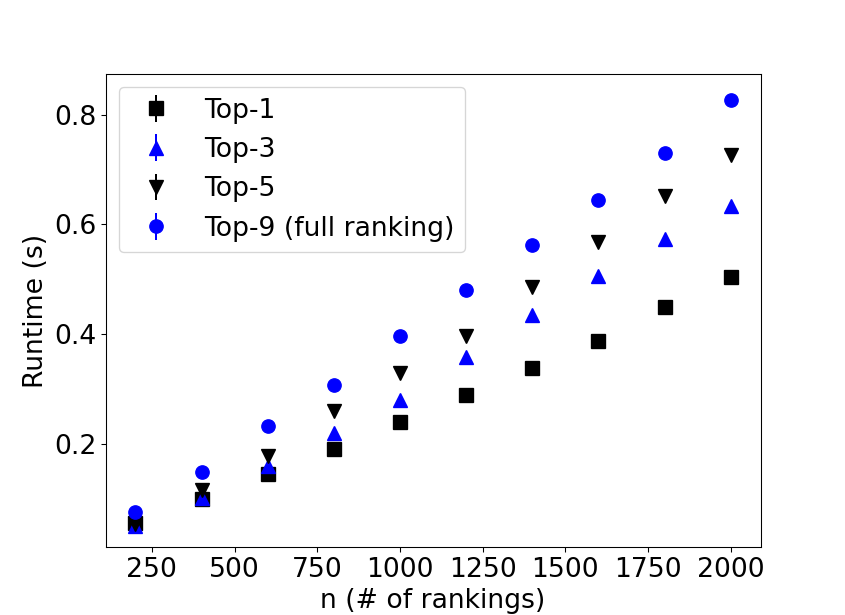}
    \vspace{-1em}
    \caption{MSE (left) and Runtime (right) with 95\% confidence intervals for MLE on $\plxp$ given top-1 only, top-3 only, top-5 only, and top-9 (full rankings) over 2000 trials. Results for Top-7 are very close to those for top-9, and therefore omitted.} 
    \label{fig:plx}
\end{figure*}

\section{MLE of $\plxp$ and Its Consistency}

%This section focus on the MLE algorithm for $\plxp$. 
Let $P=(O_1, \ldots, O_n)$ denote the input data, where for each $j\le n$, $O_j$ is a top-$l_j$ order. Let  $\mx\in\mathbb R^{d\times mn}$ denote the feature matrix. MLE of $\plxp$ computes the parameter that maximize the following log likelihood function: 
\begin{align*}
LL(P|\vec\phi, \vec\beta)&=\sum^n_{j=1}\ln\Pr\nolimits_{\plxp}(O_j|\vec\beta)\\
&=\sum^n_{j=1}(\ln\phi_{l_j}+\ln\Pr\nolimits_{\plx}(O_j|\vec\beta))
\end{align*}
Note that  $\vec\phi$ and $\vec\beta$ parameter are separated in the log likelihood function, we can compute them separately as follows.
\begin{align}
&\vec\phi^*=\arg\max_{\vec\phi}\sum^n_{j=1}\ln\phi,\text{s.t.} \sum^{m-1}_{l=1}\phi_l=1\label{eq:mlephi}\\
&
\vec\beta^*=\arg\max_{\vec\beta}\sum^n_{j=1}\ln\Pr\nolimits_{\plx}(O_j|\vec\beta)\label{eq:mle}
\end{align}

$\vec\phi$  can be easily computed by counting the frequencies of each top-$l$ order. The main challenge is to accurately estimate the $\vec\beta$ part, which is the main focus of the rest of this section.

The following theorem provides a necessary and sufficient condition for the objective function in  \eqref{eq:mle} to be strictly concave, which turns out to be the same condition for the identifiability of $\plxp$. Strict concavity is desirable because, combined with boundedness, it guarantees the convergence of MLE.

\begin{thm}\label{thm:log-concavity}
For any $\plxp$ and any data $P=(O_1, \ldots, O_n)$, the log likelihood function in \eqref{eq:mle} is strictly concave if and only if $\bignorm(\mx)$ has full row rank.
\end{thm}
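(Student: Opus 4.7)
The plan is to compute the Hessian of the objective in \eqref{eq:mle} directly and show it is negative definite exactly when $\bignorm(\mx)$ has full row rank. Write each agent's contribution in the form
\[
\ln\Pr\nolimits_{\plx}(O_j\mid\vec\beta)=\sum_{p=1}^{l_j}\vec\beta\cdot\vec x_{j,i_p}-\sum_{p=1}^{l_j}\ln\!\sum_{q=p}^{m}\exp(\vec\beta\cdot\vec x_{j,i_q}).
\]
The linear part contributes nothing to the Hessian. Each term $g_{j,p}(\vec\beta)=\ln\sum_{q=p}^m\exp(\vec\beta\cdot\vec x_{j,i_q})$ is a log-sum-exp whose Hessian is the covariance matrix $\Sigma_{j,p}$ of $\{\vec x_{j,i_q}\}_{q\ge p}$ under the softmax distribution $p_{j,p,q}\propto\exp(\vec\beta\cdot\vec x_{j,i_q})$. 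Thus the Hessian of the log likelihood is $H(\vec\beta)=-\sum_{j,p}\Sigma_{j,p}$, a sum of negated positive semidefinite matrices.

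Next I would characterize the null space of $H(\vec\beta)$. Because each $\Sigma_{j,p}$ is PSD, $\vec v^\top H(\vec\beta)\vec v=0$ iff $\vec v^\top\Sigma_{j,p}\vec v=0$ for every $j,p$. A covariance matrix under a strictly positive probability vector vanishes on $\vec v$ iff the projections $\vec v^\top\vec x_{j,i_q}$ are constant over the support, which here is $\{q:p\le q\le m\}$. The strongest such constraint comes from $p=1$ (always present since $l_j\ge 1$ for every agent), which forces $\vec v^\top\vec x_{j,q}$ to be constant over all $q\in\{1,\dots,m\}$, and the constraints at $p>1$ follow automatically. Hence
\[
\ker H(\vec\beta)=\{\vec v:\vec v^\top(\vec x_{j,q}-\vec x_{j,1})=0\ \forall j,q\}=\ker\bignorm(\mx)^\top,
\]
independently of $\vec\beta$ (the softmax weights are irrelevant because they are strictly positive).

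For the ``if'' direction, if $\bignorm(\mx)$ has full row rank then $\ker\bignorm(\mx)^\top=\{\vec 0\}$, so $H(\vec\beta)$ is negative definite at every $\vec\beta$, implying strict concavity. For the ``only if'' direction, if the row rank is deficient, pick a nonzero $\vec v$ with $\vec v^\top\bignorm(\mx)=\vec 0$; then $\vec v\cdot\vec x_{j,q}=\vec v\cdot\vec x_{j,1}$ for every $j,q$. Substituting $\vec\beta+t\vec v$ into $g_{j,p}$, the factor $\exp(t\vec v\cdot\vec x_{j,1})$ pulls out of the inner sum and cancels exactly the linear shift $\sum_p t\vec v\cdot\vec x_{j,i_p}=l_j\,t\vec v\cdot\vec x_{j,1}$, so the log likelihood is constant along the line $\vec\beta+t\vec v$, contradicting strict concavity.

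The main obstacle is the null-space reduction: showing rigorously that the joint kernel of all $\Sigma_{j,p}$ coincides with $\ker\bignorm(\mx)^\top$ requires care to note that (a) strictly positive softmax weights make the kernel condition equivalent to exact equality of the $\vec v^\top\vec x_{j,i_q}$'s across $q$, (b) the $p=1$ condition dominates the $p>1$ conditions so we can drop the latter, and (c) the characterization is $\vec\beta$-independent so ``ND everywhere'' is equivalent to ``ND at one point.'' The remaining steps are a standard log-sum-exp Hessian computation and a one-line construction of an affine direction of constancy; nothing else is delicate.
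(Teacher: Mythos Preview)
Your argument is correct and takes a genuinely different route from the paper's proof. The paper works with the midpoint inequality directly: for each log-sum-exp block $g(\vec\beta)=\ln\sum_{a_{i'}\in\ma'}\exp(\vec\beta\cdot\vec x_{ji'})$ it expands $g(\vec\beta^{(1)})+g(\vec\beta^{(2)})-2g\bigl(\tfrac{\vec\beta^{(1)}+\vec\beta^{(2)}}{2}\bigr)$ algebraically into an explicit sum of squares of the form $\bigl(\exp(\cdot)-\exp(\cdot)\bigr)^2$, then reads off that a square vanishes iff $(\vec\beta^{(1)}-\vec\beta^{(2)})\cdot(\vec x_{ji'_1}-\vec x_{ji'_2})=0$, and invokes Lemma~\ref{lem:rank} to tie this to $\rank(\bignorm(\mx))$. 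You instead compute the Hessian, identify each block's Hessian as a softmax covariance $\Sigma_{j,p}$, and characterize $\ker H(\vec\beta)$ via the support of the softmax; the $p=1$ term already forces constancy of $\vec v^\top\vec x_{j,\cdot}$ across all $m$ alternatives, so $\ker H(\vec\beta)=\ker\bignorm(\mx)^\top$ for every $\vec\beta$.

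Your approach is cleaner conceptually: it leverages the standard ``Hessian of log-sum-exp is a covariance'' fact, makes the $\vec\beta$-independence of the kernel transparent (it depends only on the softmax support, which is always full), and the observation that $p=1$ dominates the other $p$ is a nice structural shortcut. The paper's sum-of-squares manipulation is more elementary in that it avoids second-order calculus entirely, but the algebra is somewhat ad hoc. Your ``only if'' direction---exhibiting an explicit affine direction of constancy---is also slightly stronger than showing the Hessian is merely semidefinite, since it sidesteps the subtlety that negative semidefinite Hessian alone does not immediately contradict strict concavity.
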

%\begin{sketch}
%We first prove that the objective function is always concave. Then we show that there exists $j\le n$ s.t. $\ln\Pr_{\plx}(O_j|\vec\beta)$ is strictly concave if and only if $\bignorm(\mx)$ has full rank.  
The full proof can be found in Appendix G.5.%~\ref{sec:proofthmconcavity}.
%\end{sketch}

 %In light of Theorem~\ref{thm:log-concavity}, consistency of MLE would follow after the boundedness of the estimated parameter as for the standard PL.  
We now introduce an assumption to guarantee the boundedness of MLE for  given data $P$. Boundedness is important for consistency because a strictly concave function may not converge if it is unbounded. For any $j\le n$,  $i_1\le m$, and $i_2\le m$, we define $\xi_{j, i_1i_2}$ as follows. %To make a compact assumption on the data, we define 
$$\xi_{j, i_1i_2}=\left\{\begin{array}{rl} 1& \text{if agent $j$ prefers $a_{i_1}$ over $a_{i_2}$}\\
-1 & \text{if agent $j$ prefers $a_{i_2}$ over $a_{i_1}$}\\
0& \text{if agent $j$'s preference between $a_{i_1}$}\\
&\text{and $a_{i_2}$ is not available}
\end{array}\right.$$ 

\begin{ass}\label{ass:bound}
Let $\mx\in\mathbb R^{d\times mn}$ denote a feature matrix and let $P$ denote the data. For any $r\le d$, there exist $j_1, j_2\in\{1, \ldots, n\}$ with $j_1\ne j_2$ and $i_1, i_2\in\{1, \ldots, m\}$ with $i_1\ne i_2$ such that $\xi_{j_1, i_1i_2}\xi_{j_2, i_1i_2}(x_{j_1i_1, r}-x_{j_1i_2, r})(x_{j_2i_1, r}-x_{j_2i_2, r})<0$.
\end{ass}

At a high level, Assumption~\ref{ass:bound} is a mild condition that requires sufficient diversity in agents' preferences, which mirrors \citeauthor{Hunter04:MM}'s assumption  for PL~\citep[Assumption 1]{Hunter04:MM}.

%its counterpart for PL identified by~\citet[Assumption 1]{Hunter04:MM} 
%This is a mild assumption. We use the following example to illustrate this assumption. Let's focus on feature $r$ for alternatives $a_1$ and $a_2$. Assumption~\ref{ass:bound} says there exist two agents, both of whose $r$-th feature value for $a_1$ is larger than that for $a_2$ but have reversed preferences over $a_1$ and $a_2$.
%This assumption is similar to the classical assumption  by~\citet[Assumption 1]{Hunter04:MM} for PL. 
The following lemma shows that
Assumption~\ref{ass:bound} is sufficient for MLE to be bounded.

\begin{lem}\label{lem:bound}
For any $\plxp$ and data $P$, if Assumption~\ref{ass:bound} holds then the MLE in~\eqref{eq:mle} is bounded.
\end{lem}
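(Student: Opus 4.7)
The plan is to show that $LL(\vec\beta)$ in~\eqref{eq:mle} is coercive: $LL(\vec\beta)\to -\infty$ whenever $\|\vec\beta\|\to\infty$. Combined with continuity and the strict concavity granted by Theorem~\ref{thm:log-concavity}, coercivity forces the supremum to be attained at a finite parameter, which is precisely the boundedness we want. (Note that in the lemma's setting strict concavity applies, since Assumption~\ref{ass:bound} in particular implies both agents and all $d$ coordinates are ``active,'' so $\bignorm(\mx)$ has full row rank.)

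First I would fix a unit vector $\vec u\in\mathbb R^d$ and study $LL(t\vec u)$ as $t\to\infty$. Expanding $\ln\Pr_{\plx}(O_j|\vec\beta)=\sum_{p=1}^{l_j}\bigl[\vec\beta\cdot\vec x_{j,i_p}-\ln\sum_{q=p}^{m}\exp(\vec\beta\cdot\vec x_{j,i_q})\bigr]$ and invoking the standard log-sum-exp asymptotic $\ln\sum_q \exp(tc_q)=t\max_q c_q+O(1)$, each inner term behaves like $t\bigl(\vec u\cdot\vec x_{j,i_p}-\max_{q\ge p}\vec u\cdot\vec x_{j,i_q}\bigr)+O(1)$. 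Hence the whole sum diverges to $-\infty$ iff there is some observed pair $(j,i_1,i_2)$ with $\xi_{j,i_1 i_2}=1$ and $\vec u\cdot(\vec x_{j,i_1}-\vec x_{j,i_2})<0$; otherwise every per-agent contribution stays $O(1)$ (bounded above by $0$), and $LL$ would fail to be coercive along $\vec u$.

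The main remaining step is to use Assumption~\ref{ass:bound} to guarantee that every nonzero $\vec u$ admits such a violating pair. The easy case $\vec u=\pm e_r$ is immediate from the assumption applied to coordinate $r$: the witnesses $(j_1,j_2,i_1,i_2)$ make the quantities $\xi_{j_1,i_1 i_2}(x_{j_1,i_1,r}-x_{j_1,i_2,r})$ and $\xi_{j_2,i_1 i_2}(x_{j_2,i_1,r}-x_{j_2,i_2,r})$ have strictly opposite signs, so one of them is strictly negative, giving a violating pair for $\vec u=e_r$; the strictly positive one similarly serves as a violating pair for $\vec u=-e_r$. The technically delicate step is to extend this to an arbitrary unit $\vec u$ that mixes coordinates. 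My approach would be by contradiction: assume $\vec u\cdot(\vec x_{j,i_1}-\vec x_{j,i_2})\ge 0$ for every observed pair with $\xi_{j,i_1 i_2}=1$, pick the coordinate $r$ with $|u_r|$ maximal, and combine the witness vectors from Assumption~\ref{ass:bound} for that coordinate (and, if needed, for the other coordinates) to produce a preference-aligned feature-difference vector whose $\vec u$-projection is forced to be strictly negative---contradicting the assumed inequality. The strict inequality ``$<0$'' in Assumption~\ref{ass:bound} (as opposed to $\le 0$) will be what lets us convert the coordinate-wise sign contradiction into a strict sign contradiction for the full $\vec u$-projection, and this is where I expect the main obstacle to lie.
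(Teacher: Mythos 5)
Your overall framework---reduce to coercivity along rays $t\vec u$, use concavity to pass from ray-wise coercivity to attainment of the maximum, and characterize the bad directions as those $\vec u$ for which no observed comparison $a_{i_1}\succ a_{i_2}$ has $\vec u\cdot(\vec x_{ji_1}-\vec x_{ji_2})<0$---is sound, and it is in fact more careful than the paper's own proof, which only examines what happens when a single coordinate $\beta_r\to\pm\infty$ with the remaining coordinates implicitly held fixed. (One side remark: Assumption~\ref{ass:bound} does \emph{not} imply that $\bignorm(\mx)$ has full row rank---duplicate a feature row and the assumption can still hold---but you do not actually need strict concavity; plain concavity, which holds unconditionally, already converts ray-wise coercivity into boundedness of superlevel sets.)

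However, the step you flagged as the main obstacle is a genuine gap, and it cannot be closed: Assumption~\ref{ass:bound} is a purely coordinate-wise sign condition and does not produce a violating pair for every mixed direction $\vec u$. Concretely, take $d=2$, $m=2$, $n=2$, both agents reporting $a_1\succ a_2$, with $\vec x_{11}-\vec x_{12}=(1,-1)^\top$ and $\vec x_{21}-\vec x_{22}=(-1,2)^\top$. Assumption~\ref{ass:bound} holds for $r=1$ (the two coordinate differences have signs $+$ and $-$) and for $r=2$ (signs $-$ and $+$), and $\bignorm(\mx)$ even has full row rank; yet for $\vec u=(3,2)^\top$ both inner products $\vec u\cdot(\vec x_{j1}-\vec x_{j2})$ equal $1>0$, so along $\vec\beta=t\vec u$ the log-likelihood equals $-2\ln(1+e^{-t})$, which increases strictly to its supremum $0$ without ever attaining it: the MLE is unbounded. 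Thus your coordinate-direction argument (the $\vec u=\pm\vec e_r$ case) is the most that Assumption~\ref{ass:bound} can deliver, and the extension to general $\vec u$ that your plan requires is false rather than merely delicate. The paper's proof silently makes the same leap---it treats $\exp(\vec\beta\cdot(\vec x_{ji_2}-\vec x_{ji_1}))$ as if it were controlled by the single factor $\exp(\beta_r(x_{ji_2,r}-x_{ji_1,r}))$, which is only legitimate when the other coordinates stay bounded---so neither argument establishes the lemma as stated; a correct statement needs a genuinely directional hypothesis, namely that for every $\vec u\ne\vec 0$ some observed comparison satisfies $\vec u\cdot(\vec x_{ji_1}-\vec x_{ji_2})<0$.
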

%{The lemma is proved by enumerating all cases under Assumption 1 and showing that under each case, when any part of $\vec\beta$ can be  unbounded, the log-likelihood function approaches negative infinity.}
The proof is provided in Appendix G.6. %~\ref{sec:proof:lem:bound}.
Finally, the following theorem provides a bound on the RMSE of MLE for $\plxp$ given that $\bignorm(\mx)$ has full row rank.

%\lirong{Is this correct? I defined it for $k$-PL with features. If we only have results for $1$-PL then we can remove $k$. Also what is the norm in the equation? ($k$ removed. norm() is defined in \eqref{eq:norm})}

%prove that when the number of agents goes to infinity, MLE converges to the ground truth parameter with probability approaching one. 

%To this end, we assume that the features of each alternative $a_i$ (where $i\in\{1, \ldots, m\}$) follows a multivariate distribution $\mu_i$. Each agent samples a feature vector from $\mu_i$ for each $a_i$. These distributions can be arbitrary. This assumption is only used to guarantee that infinite many agents can be generated and is not used in our proof of consistency.

%Then we consider a series of models PL$[X_1, \ldots, X_n]$) where $n$ increases from $1$ to $+\infty$. We say an algorithm is consistent if as $n$ increases, the output of the algorithm converges to the ground truth parameter with probability approaching $1$. We will prove that MLE for $\plxp$ is consistent under this definition of consistency.\lirong{The following theorem does not mention $\mu_i$?} 

%\begin{thm}\label{thm:consistency}
%For any $\plxp$ where $\mx$ has $d$ rows. For any preference profile $P=(R_1, \ldots, R_n)$, MLE for $\plxp$ is consistent if Assumption~\ref{ass:bound} holds and $d=\rank(\bignorm(\mx))$.
%\end{thm}

\begin{thm}\label{thm:msebound}
Given any $\plxp$ over $m$ alternatives and $n$ agents with the feature matrix $\mx\in\mathbb R^{d\times mn}$. Define $L(\vec\beta)=\frac 1 n \sum^n_{j=1}\ln\Pr\nolimits_{\plx}(O_j|\vec\beta)$, which is $\frac 1 n$ of the objective function in \eqref{eq:mle}. Let $H(\vec\beta)$ denote the Hessian matrix of $L(\vec\beta)$ and $\lambda_1(\vec\beta)$ be the smallest eigenvalue of $-H(\vec\beta)$. Let $\vec\beta_0$ denote the ground truth parameter and $\vec\beta^*$ denote the estimated parameter that is computed using \eqref{eq:mle}. Define $\lambda_{\min}=\min_{0\le\sigma\le 1}\lambda_1(\sigma\vec\beta^*+(1-\sigma)\vec\beta^0)$.

If \bignorm($\mx$) has full row rank and Assumption~\ref{ass:bound} holds, then for any $0<\delta<1$, with probability $1-\delta$, 
\begin{equation}\label{eq:samplecomplexity}
||\vec\beta^*-\vec\beta_0||_2\le\frac {\sqrt{8(m-1)^2c^2d\ln(\frac {2d} {\delta})}} {\lambda_{\min}\sqrt{n}},
\end{equation}
where $c$ is the difference between the largest and the smallest entries of $\mx$.
\end{thm}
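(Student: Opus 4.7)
The plan is to follow the classical ``concavity plus score concentration'' recipe for MLE sample-complexity bounds, in the spirit of \citep{Khetan16:Data}, but leveraging the fact that, in contrast to standard PL, the Hessian of our objective has no zero eigenvalues.

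\textbf{Step 1: deterministic bound from concavity.} Since $\bignorm(\mx)$ has full row rank, Theorem~\ref{thm:log-concavity} yields strict concavity of $L$, so $\lambda_{\min} > 0$, and Lemma~\ref{lem:bound} (via Assumption~\ref{ass:bound}) yields boundedness of the maximizer, guaranteeing that $\vec\beta^*$ exists and is finite. A second-order Taylor expansion of $L$ about $\vec\beta_0$ gives
\[
L(\vec\beta^*) - L(\vec\beta_0) = \nabla L(\vec\beta_0)^{\!\top}(\vec\beta^*-\vec\beta_0) + \tfrac{1}{2}(\vec\beta^*-\vec\beta_0)^{\!\top} H(\tilde{\vec\beta})(\vec\beta^*-\vec\beta_0)
\]
for some $\tilde{\vec\beta} = \sigma\vec\beta^* + (1-\sigma)\vec\beta_0$. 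By the definition of $\lambda_{\min}$ the quadratic form is at most $-\tfrac{\lambda_{\min}}{2}\|\vec\beta^*-\vec\beta_0\|_2^2$, and by optimality of $\vec\beta^*$ the left-hand side is nonnegative. Applying Cauchy--Schwarz to the linear term and rearranging yields the deterministic inequality $\|\vec\beta^*-\vec\beta_0\|_2 \le 2\,\|\nabla L(\vec\beta_0)\|_2/\lambda_{\min}$.

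\textbf{Step 2: concentration of the score.} The gradient $\nabla L(\vec\beta_0) = \tfrac{1}{n}\sum_{j=1}^n \nabla \ln \Pr_{\plx}(O_j|\vec\beta_0)$ is an average of $n$ independent, mean-zero random vectors; centering at $\vec\beta_0$ is the standard score identity $\mathbb{E}[\nabla \ln \Pr(O_j|\vec\beta_0)]=0$. Coordinate-wise, the $j$-th summand along direction $r$ is
\[
\sum_{p=1}^{l_j}\!\Big(x_{j i_p, r} - \sum_{q=p}^{m} w^{(p,j)}_q \, x_{j i_q, r}\Big),
\]
where $w^{(p,j)}_q = \exp(\vec\beta_0\!\cdot\!\vec x_{j i_q})/\sum_{q'=p}^{m}\exp(\vec\beta_0\!\cdot\!\vec x_{j i_{q'}})$ are nonnegative softmax weights summing to $1$ over $q\ge p$. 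Each bracket is thus a feature entry minus a convex combination of feature entries, so its magnitude is at most $c$; summing over $p\le l_j\le m-1$, the full summand lies in $[-(m-1)c,(m-1)c]$. Hoeffding's inequality applied coordinate-wise together with a union bound over $r=1,\dots,d$ gives
\[
\Pr\!\Big(\|\nabla L(\vec\beta_0)\|_\infty \ge t\Big) \le 2d \exp\!\Big(-\frac{n t^2}{2(m-1)^2 c^2}\Big).
\]
Setting the right-hand side equal to $\delta$ yields $\|\nabla L(\vec\beta_0)\|_\infty \le (m-1)c\sqrt{2\ln(2d/\delta)/n}$ with probability $\ge 1-\delta$, and then $\|\nabla L(\vec\beta_0)\|_2 \le \sqrt{d}\,\|\nabla L(\vec\beta_0)\|_\infty \le (m-1)c\sqrt{2d\ln(2d/\delta)/n}$. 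Plugging this into the bound from Step~1 produces exactly \eqref{eq:samplecomplexity}.

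\textbf{Main obstacle.} The one nontrivial computation is the coordinate-wise boundedness of the score: one must verify carefully that $\partial_{\beta_r}\ln\Pr_{\plx}(O_j|\vec\beta_0)$ decomposes into at most $m-1$ ``feature minus convex combination of features'' terms, so that the per-summand range is $2(m-1)c$ independent of $\vec\beta_0$. This is where the feature setting diverges from \citep{Khetan16:Data}: in standard PL the Hessian has a zero eigenvalue from translation invariance, forcing that analysis to project onto the orthogonal complement of $\mathbf{1}$, whereas the full-row-rank condition on $\bignorm(\mx)$ here delivers a strictly positive $\lambda_{\min}$ that lets the deterministic step in Step~1 go through without any projection, making the clean RMSE bound possible.
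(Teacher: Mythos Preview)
Your proposal is correct and follows essentially the same approach as the paper: a deterministic bound $\|\vec\beta^*-\vec\beta_0\|_2\le 2\|\nabla L(\vec\beta_0)\|_2/\lambda_{\min}$ obtained via second-order Taylor expansion, optimality of $\vec\beta^*$, and Cauchy--Schwarz, followed by coordinate-wise Hoeffding on the score (each of the at most $m-1$ summands being a feature minus a convex combination of features, hence bounded by $c$) and a union bound over the $d$ coordinates. The only cosmetic differences are that the paper records the mean-zero property of the score as a separate lemma (Lemma~\ref{lem:gradzero}) rather than invoking the standard score identity, and applies Hoeffding with threshold $\epsilon/\sqrt d$ directly instead of passing through $\|\cdot\|_\infty$; the resulting constants are identical.
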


%\begin{sketch}
%We first show that $||\vec\beta_0-\vec\beta^*||_2\le\frac {2||\nabla L(\vec\beta_0)||_2} {\lambda_{\min}}$, where $\nabla L(\cdot)$ is the gradient of the objective function. Then we bound $||\nabla L(\cdot)||_2$ using Heoffding's Inequality. 
The full proof is in Appendix G.7.%~\ref{sec:msebound}.
%\end{sketch}

The RMSE (root-mean-square-error) bound given by Theorem~\ref{thm:msebound} is not tight since we make no assumptions on the distribution of features. While Theorem~\ref{thm:msebound} does provide insights on convergence of MLE:\\
\noindent 1. {\em Consistency}: as $n$ increases, RMSE $||\vec\beta_0-\vec\beta^*||_2$ decreases at the rate of $\frac 1 {\sqrt{n}}$. When $n$ approaches infinity, RMSE approaches $0$.\\
\noindent 2. {\em Sample complexity}: for any $\epsilon>0$, $0<\delta<1$, $\Pr(||\vec\beta^*-\vec\beta_0||\le\epsilon)\ge 1-\delta$ when $n\ge \frac {8(m-1)^2c^2d\ln\frac {2d} {\delta}} {\lambda_{\min}^2\epsilon^2}$. This is obtained by letting $\epsilon\ge\frac {\sqrt{8(m-1)^2c^2d\ln(\frac {2d} {\delta})}} {\lambda_{\min}\sqrt{n}}$.\\
\noindent 3. {\em Approximation of $\lambda_{\min}$}: in practice, when the size of data is not too small, $\lambda_{\min}$ can be approximated by $\lambda_1(\vec\beta^*)$ because $\vec\beta^*$ approaches $\vec\beta^0$ as $n$ increases. This gives a practical way of computing the RMSE bound when the ground truth is unknown.

%Theorem~\ref{thm:msebound} immediate implies the consistency of MLE. In addition, it can also be easily applied to obtain the sample complexity of MLE as in the following corollary, whose proof can be found in Appendix~\ref{sec:msebound}.

%\begin{coro}
%\label{coro:samplecomplexity}
%For any $m$ multivariate distributions $\vec \mu = (\mu_1,\ldots,\mu_m)$ and any $\mx$ such that $\bignorm(\mx)$ has full row rank, let $\vec\beta_0$ denote the ground truth parameter and $\vec\beta^*$ denote the estimated parameter that maximizes \eqref{eq:mle}. If Assumption~\ref{ass:bound} holds, then for any $\epsilon>0$, $0<\delta<1$, we have 
%\begin{equation}\label{eq:samplecomplexity}
%\Pr(||\vec\beta^*-\vec\beta_0||\le\epsilon)\ge 1-\delta
%\end{equation}
%when $n\ge \frac {8(m-1)^2c^2d\ln\frac {2d} {\delta}} {\lambda_1^2\epsilon^2}$
%where $c$ and $\lambda_1$ are the same as in Theorem~\ref{thm:msebound}.
%\end{coro}

%\lirong{can we prove the consistency of MLE for identifiable parameters in $k$-$\plxp$? We may call it ``generically consistent''}

\section{Experiments}

We show experiments on synthetic data in this section and provide additional experiments on mixture models and on real-world data in the appendix.

{\noindent\bf Setup.} Fix $m=10$ and $d=10$. For each agent and each alternative, the feature vector is  generated in $[-1,1]$ uniformly at random. Each component in $\vec\beta$ is generated uniformly at random in $[-2, 2]$. MLE for $\plxp$ was implemented in MATLAB with the built-in \texttt{fminunc} function and tested on a Ubuntu Linux server with Intel Xeon
E5 v3 CPUs each clocked at 3.50 GHz. We use mean squared error (MSE) and runtime to measure the statistical efficiency and computational efficiency of algorithms, respectively. Results are shown in Figure~\ref{fig:plx}. All values are computed by averaging over $2000$ trials.

\noindent{\bf Observations.} 
Figure~\ref{fig:plx} shows the performance of MLE for $\plxp$. We observe that MSE decreases as the number of agents increases, which demonstrates consistency of MLE for $\plxp$. Moreover, learning from top-$l$ preferences with different $l$ values provides tradeoffs between statistical efficiency and computational efficiency. 

\section{Summary and Future Work}
We provide the first set of theoretical results on the identifiability of mixtures of PL with features for top-$l$ preferences. We also identify conditions for the MLE of $\plxp$ to be consistent, and propose an EM algorithm to handle general $k$-$\plxp$. In the full version of this paper~\citep{zhao20:learning}, we provide a generalized Rank-Breaking then Composite Marginal Likelihood algorithm for learning RUMs beyond PL from incomplete preferences and show its performance on synthetic data. \citep{zhao20:learning} also includes additional experiments on mixture models as well as missing proofs. 
%Experiments on synthetic data demonstrate the effectiveness of MLE on $\plxp$ and experiments on real-world data show that agent features help improve prediction accuracy. 
Generic identifiability and efficient algorithms for $k$-$\plxp$ are natural questions for future work.
%We propose a new model $\plxp$ that generates top-$l$ preferences based on PL with features. We characterize conditions for $\plxp$ and its mixtures ($k$-$\plxp$) to be identifiable. We prove that under the same condition, the objective function of MLE is strictly concave. We further characterize conditions for MLE to be consistent. Our experiments on synthetic data demonstrate the effectiveness of MLE on $\plxp$ and EM algorithm for $k$-$\plx$.

%\section*{Broader Impacts}
%
%This work is mainly theoretical. As other models and methods in statistics and ML, the  research will benefit a wide range of applications discussed in the Introduction. It is also possible that the research can be used for undesirable activities (as all ML methods do). We are not aware of a situation that this work is particular risky.

\section*{Acknowledgements}
We thank anonymous reviewers for helpful comments. Lirong Xia is supported by NSF \#1453542 and a gift fund from Google. Ao Liu is supported by an RPI-IBM AI Horizons scholarship.

{\small
\bibliography{references}
\bibliographystyle{named}
}

\appendix
\clearpage

\begin{figure*}[htp]
    \centering
    \includegraphics[width = 0.5\textwidth]{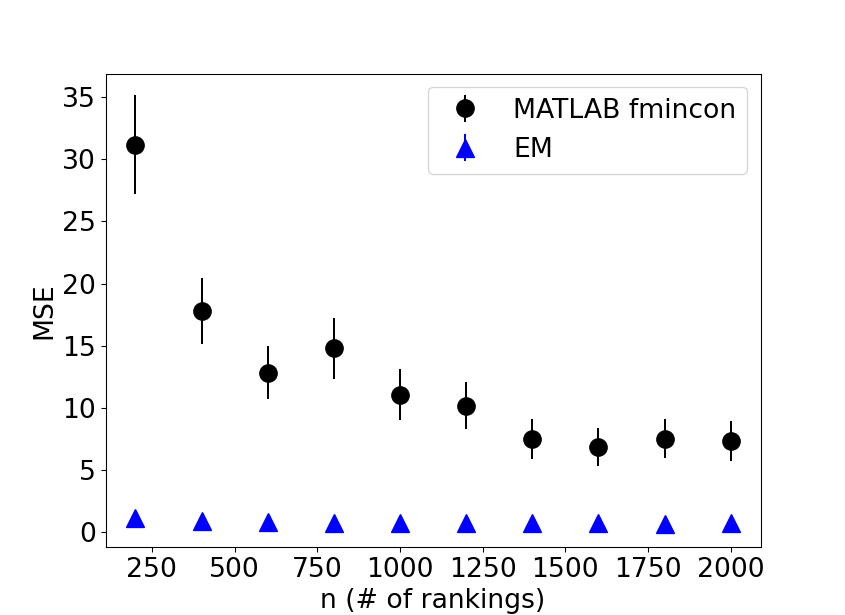}\includegraphics[width=0.5\textwidth]{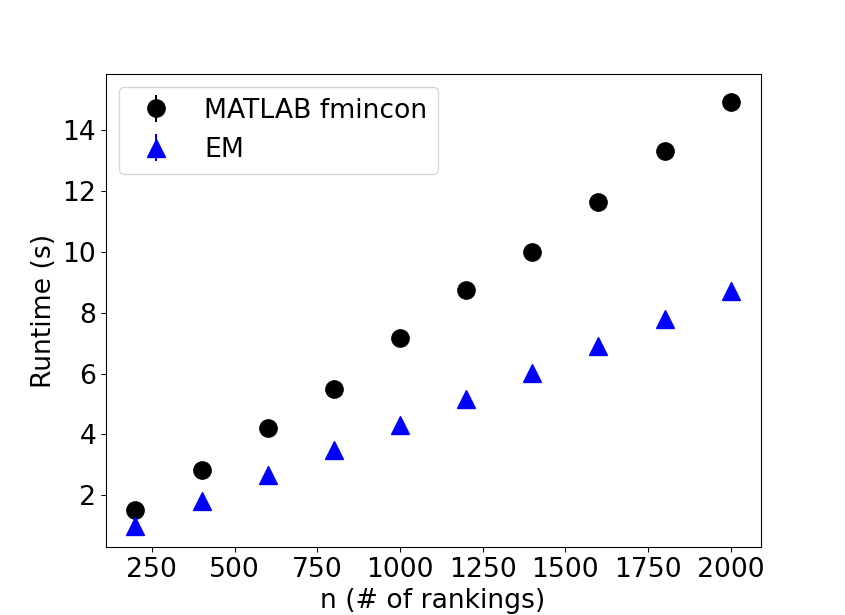}
    \caption{Comparisons of MSE and running time with 95\% confidence intervals between maximizing the likelihood function using a generic Matlab function \texttt{fmincon} and 10-iteration EM for k-$\plx$ over $2000$ trials.}
    \label{fig:2plx}
\end{figure*}

\section{Additional Definitions of Ranking Models}
\label{sec:dfn}

\begin{dfn}[Random utility models (RUMs)] The random utility model over $\ma$ associates each alternative $a_i$ with a utility distribution $\mu_i$. The parameter space is $\Theta = \{\vec\theta=\{\vec\theta_i|i=1, 2, \ldots, m\}\}$, where $\vec\theta_i$ is the parameter for the utility distribution $\mu_i$ corresponding to alternative $a_i$. The sample space is $\ml(\ma)^n$. A linear order is generated in two steps. First, for each $i\le m$, a latent utility $u_i$ is generated from $\mu_i(\cdot|\vec\theta_i)$ independently; second, the alternatives are ranked according to their utilities in the descending order. Given a parameter $\vec\theta$, the probability of generating a linear order $R=a_{i_1}\succ a_{i_2}\succ \ldots\succ a_{i_m}$ is
\begin{align*}
\Pr\nolimits_{\text{RUM}}(R|\vec{\theta})&=\int^\infty_{-\infty}\int^\infty_{u_{i_m}}\cdots\int^\infty_{u_{i_2}}\mu_{i_m}(u_{i_m}|\vec\theta_{i_m})\\
&\cdots\mu_{i_1}(u_{i_1}|\vec\theta_{i_1})du_{i_1}du_{i_2}\cdots du_{i_m}
\end{align*}
\end{dfn}

PL is a special case of RUM where $\mu_i(\cdot|\theta_i)$ is the Gumbel distribution $\mu_i(x_i|\theta_i)=e^{-(x_i-\theta_i)-e^{-(x_i-\theta_i)}}$.

\begin{dfn} (RUM with features ($\rumx$)). The parameter space is $\Theta=\{\vec\beta=\{\beta_i|1\le i\le d\}\}$. For any $1\le j\le n$ and $1\le i\le m$, the utility distribution for agent $j$, alternative $a_i$ is parameterized by $\vec\beta\cdot\vec x_{ji}$ as its mean. The sample space is $\ml(\ma)^n$. Given a parameter $\vec\beta\in B$, the probability of any linear order $R_j=[a_{i_1}\succ a_{i_2}\succ\ldots\succ a_{i_m}]$ given by agent $j$ is 
\begin{align*}
\Pr\nolimits_{\rumx}(R_j|\vec{\beta})&=\int^\infty_{-\infty}\int^\infty_{u_{i_m}}\cdots\int^\infty_{u_{i_2}}\mu_{i_m}(u_{i_m}|\vec\beta_{i_m}\cdot\vec x_{jm})\\
&\cdots
\mu_{i_1}(u_{i_1}|\vec\beta_{i_1}\cdot\vec x_{ji_1})du_{i_1}du_{i_2}\cdots du_{i_m}
\end{align*}
\end{dfn}

\begin{dfn} (RUM with features for $l$-way orders ($\rumxp$)). The parameter space is $\Theta=\{\vec\beta=\{\beta_i|1\le i\le d\}\}$. For any $1\le j\le n$ and $1\le i\le m$, the utility distribution for agent $j$, alternative $a_i$ is parameterized by $\vec\beta\cdot\vec x_{ji}$ as its mean. The sample space is $\mi(\ma)$. Given a parameter $\vec\beta\in B$, the probability of any $l$-way order $R_j=[a_{i_1}\succ a_{i_2}\succ\ldots\succ a_{i_l}]$ given by agent $j$ is 
\begin{align*}
&\Pr\nolimits_{\rumx}(R_j|\vec{\beta})=p^l(1-p)^{(m-l)}\int^\infty_{-\infty}\int^\infty_{u_{i_l}}\cdots\int^\infty_{u_{i_2}}\\
&\mu_{i_l}(u_{i_l}|\vec\beta_{i_l}\cdot\vec x_{jl})\cdots
\mu_{i_1}(u_{i_1}|\vec\beta_{i_1}\cdot\vec x_{ji_1})du_{i_1}du_{i_2}\cdots du_{i_l},
\end{align*}
where $0<p\le 1$.
\end{dfn}

The above definition implies a two-step partial order generation procedure: (1) sample the subset of alternatives where each alternative is selected with probability $p$; (2) generate a linear order over the sampled subset of alternatives. 

\begin{dfn} (Mixtures of $k$ RUMs with features ($k$-$\rumx$)). The parameter space has two parts. The first part is the vector of mixing coefficients $\vec\alpha=(\alpha_1, \alpha_2, \ldots, \alpha_k)$ and the second part is $(\vec\beta^{(1)}, \vec\beta^{(2)}, \ldots, \vec\beta^{(k)})$, where for $r=1, \ldots, k$ $\vec\beta^{(r)}=\{\beta^{(r)}_i|1\le i\le d\}\}$. The sample space is $\ml(\ma)^n$. Given a parameter $\vec\beta\in B$, the probability of any linear order $R_j=[a_{i_1}\succ a_{i_2}\succ\ldots\succ a_{i_m}]$ given by agent $j$ is 
$$
\Pr\nolimits_{k-\rumx}(R_j|\vec\beta)=\sum^k_{r=1}\alpha_r\Pr\nolimits_{\rumx}(R_j|\vec\beta^{(r)}).$$
\end{dfn}

\begin{dfn}\label{def:bfplphi} ($\plyzp$). Let $Y\in\mathbb R^{L\times n}$ denote an agent feature matrix and let $Z\in\mathbb R^{K\times m}$ denote an alternative feature matrix. The parameter space is $\Theta={\mathbb R}^{K\times L}\times\{ \vec\phi\in {\mathbb R}_{\ge 0}^{m-1}: \vec\phi\cdot\vec 1 =1 \}$. The sample space is $\mt(\ma)^n$. Given a parameter $(B, \vec\phi)$, the probability of any top-$l_j$ order $O_j=a_{i_1}\succ a_{i_2}\succ\ldots\succ a_{i_{l_j}}\succ\text{others}$ given by agent $j$ is 
$
\Pr\nolimits_{\plyzp}(O_j|B, \vec\phi)=\phi_{l_j}\prod^{l_j}_{p=1}\frac {\exp(\vec z^\top_p B\vec y_j)} {\sum^m_{q=p}\exp(\vec z^\top_q B\vec y_j)}.$
\end{dfn}

\begin{dfn} ($k$-$\plxp$). Let $\mx\in\mathbb R^{d\times mn}$ denote a feature matrix. The parameter space $\Theta$ has three parts. The first part is the vector of mixing coefficients $\vec\alpha=(\alpha_1, \alpha_2, \ldots, \alpha_k)$; the second part is $(\vec\beta^{(1)}, \vec\beta^{(2)}, \ldots, \vec\beta^{(k)})$, where for $r=1, \ldots, k$, $\vec\beta^{(r)}=\{\beta^{(r)}_i|1\le i\le d\}\}$; and the third part is $\vec\phi=(\phi_{1}, \ldots, \phi_{m-1})$, where $\phi_{1}, \ldots, \phi_{m-1}\ge 0, \sum^{m-1}_{l=1}\phi_l=1$. The sample space is $\mt(\ma)^n$. Given a parameter $\vec\theta\in \Theta$, the probability of any top $l_j$ order $O_j=[a_{i_1}\succ a_{i_2}\succ\ldots\succ a_{i_{l_j}}]$ given by agent $j$ is 
$$
\Pr\nolimits_{k-\plxp}(O_j|\vec\theta)=\phi_{l_j}\sum^k_{r=1}\alpha_r\Pr\nolimits_{\plx}(O_j|\vec\beta^{(r)}).$$
\end{dfn}

\section{Identifiability of $\rumx$}

\begin{thm}\label{thm:idrumx}
Given $0<p\le 1$, for any $\mx$, $\rumxp$ is identifiable if and only if $p>0$ and $\bignorm(\mx)$ has full row rank. 
\end{thm}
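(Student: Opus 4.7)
The plan is to parallel the argument used for Theorem~\ref{thm:idplx}, but to pass through marginal pairwise preferences (i.e.\ $2$-way orders) rather than through the PL-specific factorization. Throughout I rely on the implicit location-family structure of $\rumx$: each $\mu_i$ is parameterized by $\vec\beta\cdot\vec x_{ji}$ as its mean, so shifting all means for agent $j$ by a common constant leaves the distribution of rankings over that agent's alternatives unchanged.

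For the sufficient direction, assume $\bignorm(\mx)$ has full row rank and $p>0$. Suppose for contradiction that $\vec\beta_1\ne\vec\beta_2$ induce identical distributions over $\mi(\ma)^n$. Restrict attention to $2$-way orders: for any agent $j$ and any pair $(i_1,i_2)$, the probability of observing $a_{i_1}\succ a_{i_2}$ equals $p^2(1-p)^{m-2}\cdot\Pr(u_{ji_1}>u_{ji_2}|\vec\beta)$, and the prefactor is strictly positive whenever $0<p<1$ (the case $p=1$ is handled analogously by reading pairwise marginals off the linear-order distribution). Because each $\mu_i$ is a location family, $\Pr(u_{ji_1}>u_{ji_2}|\vec\beta)$ is a strictly monotone function of the mean gap $\vec\beta\cdot(\vec x_{ji_1}-\vec x_{ji_2})$, so equality of pairwise probabilities forces $(\vec\beta_1-\vec\beta_2)^\top(\vec x_{ji_1}-\vec x_{ji_2})=0$ for every $j,i_1,i_2$. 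Taking $i_1=i$ and $i_2=1$ for each $i\ge 2$ and each $j$ yields $(\vec\beta_1-\vec\beta_2)^\top\bignorm(\mx)=\vec 0$, and full row rank then gives $\vec\beta_1=\vec\beta_2$, a contradiction.

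For the necessary direction I would prove the contrapositive on each condition. If $\bignorm(\mx)$ fails to have full row rank, choose a nonzero $\vec v$ in its left null space; then $\vec v\cdot\vec x_{ji}=\vec v\cdot\vec x_{j1}$ for every $j,i$. For any $\vec\beta_0$, setting $\vec\beta_1=\vec\beta_0+\vec v$ shifts all of agent $j$'s mean utilities by the common constant $\vec v\cdot\vec x_{j1}$, which (by the location-family property) preserves the distribution of every $l$-way order, so $\vec\beta_0$ and $\vec\beta_1$ are indistinguishable. If $p=0$, every $l$-way order with $l\ge 1$ has probability zero under every parameter, so the model is trivially non-identifiable.

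The main obstacle is making the two properties used above mathematically precise in the required generality: (i) that the utility distributions are genuine location families so that a common shift of one agent's means does not change her ranking distribution, and (ii) that the CDF of $\epsilon_{i_2}-\epsilon_{i_1}$ is strictly increasing so that the mean gap is recovered from the pairwise probability. Both are standard for the RUM families considered in the paper (PL, probit, and their common generalizations with continuous everywhere-positive noise density), but the write-up needs to state them explicitly as hypotheses on $\{\mu_i\}$; once they are in place, the rest of the proof is a direct lift of the argument behind Theorem~\ref{thm:idplx}.
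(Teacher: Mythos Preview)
Your proposal is correct and follows essentially the same argument as the paper's proof: both directions hinge on the pairwise marginal probability being a strictly monotone function of the mean gap $\vec\beta\cdot(\vec x_{ji_1}-\vec x_{ji_2})$, together with location-shift invariance of the ranking distribution. The paper states these two properties more tersely (it writes ``due to monotonicity of $f$'' and uses the shift $\sigma$ without naming the location-family assumption), whereas you flag them explicitly as hypotheses to be checked; your handling of the $p=1$ boundary case via linear-order marginals is also a detail the paper glosses over.
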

\begin{proof} Let $f(\vec\beta\cdot\vec x_{ji_1}-\vec\beta\cdot\vec x_{ji_2})$ denote the probability of $a_{i_1}\succ a_{i_2}$ by agent $j$.

``if" direction. We prove that if $\rumxp$ is not identifiable, then $\bignorm(\mx)$ does not have full row rank. Since $\rumxp$ is not identifiable, there exist $\vec\beta^{(1)}\ne\vec\beta^{(2)}$ leading to the same distribution of rankings. Then for any $j$ and any $i_1\ne i_2$, we have 

\begin{equation}\label{eq:diff}
f({\vec\beta^{(1)}\cdot\vec x_{ji_1}}-\vec\beta^{(1)}\cdot\vec x_{ji_2})=f({\vec\beta^{(2)}\cdot\vec x_{ji_1}}-\vec\beta^{(2)}\cdot\vec x_{ji_2}),
\end{equation}
Due to monotonicity of $f$, we have $$(\vec\beta^{(1)}-\vec\beta^{(2)})(\vec x_{ji_1}-\vec x_{ji_2})=0.$$

There are $n(m-1)$ independent such equations, corresponding the rows in $\bignorm(\mx)$. Since there exists nonzero $\vec\beta^{(1)}-\vec\beta^{(2)}$ s.t. $\bignorm(\mx)\cdot(\vec\beta^{(1)}-\vec\beta^{(2)})=0$, $\bignorm(\mx)$ does not have full row rank.

``only if" direction. For the purpose of contradiction suppose $\bignorm(\mx)$ does not have full row rank, then there exists $\beta^{(1)}\ne\beta^{(2)}$ s.t. $\bignorm(\mx)\cdot(\vec\beta^{(1)}-\vec\beta^{(2)})=0$, which means for any agent $j$ and any two alternatives $a_1$ and $a_2$, $\vec\beta^{(1)}(\vec x_{ji_1}-\vec x_{ji_2})=\vec\beta^{(2)}(\vec x_{ji_1}-\vec x_{ji_2})$ holds. Let $\sigma=\vec\beta^{(1)}\cdot\vec x_{j1}-\vec\beta^{(2)}\cdot\vec x_{j1}$. Then for any $i=1, \ldots, m$, $\vec\beta^{(1)}\cdot\vec x_{j1}-\vec\beta^{(2)}\cdot\vec x_{j1}=\sigma$. This means $\vec\beta^{(1)}$ and $\vec\beta^{(2)}$ lead to exactly the same distribution of rankings. The model is not identifiable, which is a contradiction.
\end{proof}

This theorem applies to PL with features as well since PL is a special case of RUM. 

\section{MLE for Learning $k$-$\plx$}

MLE algorithm for $k$-$\plx$ is straightforward. We compute $
\vec\alpha$ and $\vec\beta^{(1)}, \ldots, \vec\beta^{(k)}$ by maximizing the log-likelihood function
\begin{align*}
&(\vec\alpha', \vec\beta'^{(1)}, \ldots, \vec\beta'^{(k)})\\
=&\arg\max_{\vec\alpha, \vec\beta^{(1)}, \ldots, \vec\beta^{(k)}}\sum^n_{j=1}\Pr\nolimits_{k\text{-}\plx}(R_j|\vec\alpha, \vec\beta^{(1)}, \ldots, \vec\beta^{(k)})
\end{align*}

\begin{algorithm}
{\bf Input}: Preference profile $P$ with $n$ orders; feature matrix $\mx$; number of iterations $T$\\
{\bf Output}: Mixing coefficients $\vec\alpha$ and $k$ components $\vec\beta^{(1)}, \ldots, \vec\beta^{(k)}$\\
{\bf Initialization}: Randomly generate $\vec\alpha^{(0)}$ and $\vec\beta^{(1, 0)},\ldots,\vec\beta^{(k, 0)}$
\begin{algorithmic}
\FOR{$t = 1$ to $T$}
\STATE E-step:   Compute $w^{(t)}_{jr}$ using \eqref{elsr:estep} for all $j=1,2,\ldots,n$ and $r = 1,2,\ldots,k$.
\STATE M-step:
\STATE Compute $\vec\alpha^{(t)}$ using \eqref{elsr:mstep}\\
\FOR{$r = 1$ to $k$}
\STATE Compute $\vec\beta^{(r, t)}$ using \eqref{eq:mle}.
\ENDFOR
\ENDFOR
\end{algorithmic}
\caption{EM algorithm for $k$-$\plxp$.}
\label{alg:em}
\end{algorithm}

\section{An EM Algorithm for $k$-$\plxp$}
In this section we propose a natural EM algorithm for $k$-$\plxp$. Let $z_{jr}$ denote the membership indicator where $z_{jr}=1$ means the order $O_j$ belongs to the $r$th component. Let $w_{jr}$ denote the weight of order $O_j$ in $r$th component. For all $j$, we have $\sum_r w_{jr}=1$. Given previous iteration estimate $(\vec\alpha^{(t-1)}, \vec\beta^{(1, t-1)}, \ldots, \vec\beta^{(k, t-1)})$, each order $O_j$ is clustered to each component as follows.
\begin{align}\label{elsr:estep}
w^{(t)}_{jr}&=\Pr(z_{jr}=1|O_j, \vec\beta^{(r, t-1)})\notag\\
&=\frac {\Pr\nolimits_{\plx}(O_j|\vec\beta^{(r, t-1)})\cdot\alpha^{(t-1)}_r} {\sum^k_{s=1}\Pr\nolimits_{\plx}(O_j|\vec\beta^{(s, t-1)})\cdot\alpha^{(t-1)}_s}
\end{align}
Then in the M step, we have
\begin{equation}\label{elsr:mstep}
\alpha^{(t)}_r = \frac {\sum_j w^{(t)}_{jr}} {n}
\end{equation}
$\vec\beta^{(r, t)}$'s are computed using MLE for $\plxp$. The algorithm is formally shown in Algorithm~\ref{alg:em}.

\section{Generalized RBCML for $\rumxp$}

\subsection{Rank Breaking}

Rank breaking is to obtain a set of (weighted) pairwise comparisons from full rankings. For example, from $\{a_2\succ a_1\succ a_3, a_1\succ a_2\succ a_3\}$, we can obtain $\{a_2\succ a_1, 2\times (a_2\succ a_3), 2\times (a_1\succ a_3), a_1\succ a_2\}$ using uniform breaking. Given a rank-breaking and data, we construct a weighted directed graph $G$, whose vertices are the alternatives and the weight of the edge from $a_{i_1}$ to $a_{i_2}$ is the frequency of $a_{i_1}\succ a_{i_2}$ in the data. We denote this frequency by $\kappa_{i_1i_2}$. $G$ can also be represented by its adjacency matrix $K$, whose diagonal entries are zeros and $(i_1, i_2)$ entry is $\kappa_{i_1i_2}$ for all $i_1\ne i_2$. In the example of $\{a_2\succ a_1\succ a_3, a_1\succ a_2\succ a_3\}$, we have 
$$
K=\begin{bmatrix}
0 & 1 & 2\\
1 & 0 & 2\\
0 & 0 & 0
\end{bmatrix}
$$

For $l$-way orders, we let $w(l)$ denote a weighting function. Define
$$
X_{a_{i_1}\succ a_{i_2}}(R)=
\begin{cases}
w(l) & \text{if } a_{i_1}\succ a_{i_2} \text{ in } R\\
0 & \text{otherwise}
\end{cases}
$$
where $l$ is the length of $R$. Given the data $P$, we let
\begin{equation}\label{eq:kappa}
\kappa_{i_1i_2}=\sum_{R\in P}X_{a_{i_1}\succ a_{i_2}}(R).
\end{equation}
Further,  we define $\bar\kappa_{i_1i_2}$ to be the expectation of $\kappa_{i_1i_2}$ given one ranking. Formally, 
\begin{equation}\label{eq:barkappadef}
\bar\kappa_{i_1i_2}=E[X_{a_{i_1}\succ a_{i_2}}(R)]=\lim_{n\rightarrow\infty}\frac {\sum_{R\in P}X_{a_{i_1}\succ a_{i_2}}(R)} n
\end{equation}

\subsection{Generalized RBCML}

Given $K$, which is a function of the data $P$, the RBCML framework for RUMs is the maximizer of composite log-marginal likelihood, which is defined below.
\begin{dfn}[Composite marginal likelihood for RUMs] Given an RUM $\mm$, for any preference profile $P$ and any $\theta$, let $p_{i_1i_2}(\vec\theta)=\Pr_\mm(a_{i_1}\succ a_{i_2}|\vec\theta)$. The composite marginal likelihood is
\begin{equation}\label{eq:clrum}
\cl(\vec\theta,P)=\prod_{i_1\neq i_2}(p_{i_1i_2}(\vec\theta))^{\kappa_{i_1i_2}},
\end{equation}
where $\kappa_{i_1i_2}$ is the $(i_1, i_2)$ entry of matrix $K$. The composite log-marginal likelihood becomes: 
\begin{equation}\label{eq:cllrum}
\cll(\vec\theta,P) =\sum_{i_1\neq i_2}\kappa_{i_1i_2}\ln p_{i_1i_2}(\vec\theta)
\end{equation}
\end{dfn}
%The first order conditions for $\cll$ are the following.

Then estimate of the parameter is

\begin{equation}\label{eq:cml}
\vec\theta^*=\arg\max_{\vec\theta}\cll(\vec\theta,P)
\end{equation}

The proposed generalized RBCML is formally shown as
\begin{algorithm}[H]
\caption{Generalized RBCML}
\label{alg:rbcml}
{\bf Input}: Profile $P$ of $n$ rankings. Function $w(l)$.\\
{\bf Output}: Estimated parameter $\vec\theta^*$.\\
{\bf Initialize} Randomly initialize $\vec\theta^{(0)}$
\begin{algorithmic}[1]
\STATE For all $i_1\ne i_2$, compute $\kappa_{i_1i_2}$ from $P$ using \eqref{eq:kappa}.
\STATE Compute $\vec\theta^*$ using \eqref{eq:cml}.
\end{algorithmic}
\end{algorithm}

\section{Additional Experiments}

This section provides additional experiment results on synthetic data and real-world data. 

\begin{figure*}[htp]
    \centering
    \includegraphics[width = 0.985\textwidth]{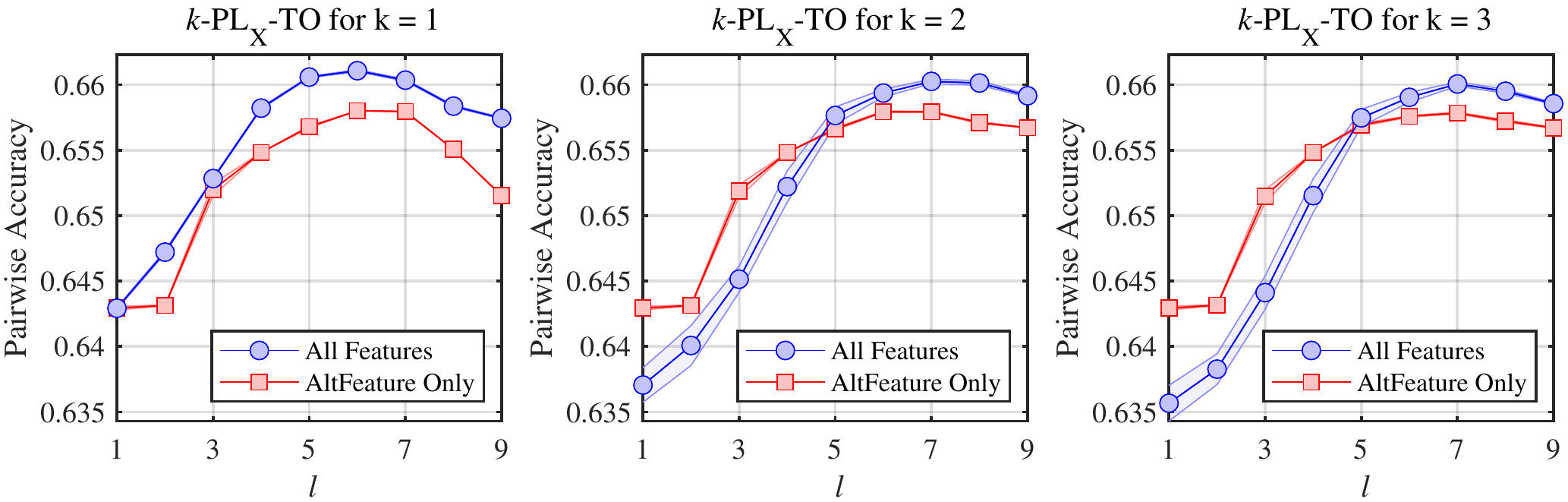}
    \vspace{-1em}
    \caption{The pairwise accuracy with 95\% confidence intervals for $k$-$\plxp$ given top-$l$ rankings on sushi dataset (top-9 means full rankings). The confidence intervals sometimes are too narrow to see (\emph{e.g.}, most points in the plot of $k = 1$.). Values are averaged over 26 5-fold cross validations.} 
    \label{fig:sushi_acc}
\end{figure*}

\subsection{Learning $k$-$\plx$ from Real-World Data}\label{sec:real_world}
\paragraph{Setup.} We learned $k$-$\plxp$ from the real-world sushi dataset~\cite{kamishima2003nantonac}, which consists of 5000 rankings over $m = 10$ alternatives. Each agent has $L = 4$ features and each alternative has $K = 4$ features. We normalized all features and learned a $k$-$\plxp$ with alternative features only (``AltFeatures Only") and $k$-$\plyzp$ (``All Features") from the data. We use Algorithm~\ref{alg:em} for $k=2, 3$, with $T=50$ fixed. For both settings, we run 5-fold cross-validations, where the training set has 4000 top-$l$ rankings while the test set has the remaining 1000 rankings. We measure the prediction accuracy using {\em pairwise accuracy}, which is the rate of correctly predicted pairwise comparisons in the test set, formally defined in Appendix~\ref{sec:accuracy}. 

\noindent{\bf Observations.} Figure~\ref{fig:sushi_acc} shows the pairwise accuracy of $k$-$\plxp$ for $k=1,2,3$ when learned from top-$l$ orders. We observe an improvement in prediction accuracy when agent features are considered in most cases (all $l$'s for $k=1$, $l\ge 5$ for $k=2, 3$). When $l$ is small and $k$ is large, agent features may harm the accuracy due to over-fitting. %Similar behaviors as  pairwise accuracy can be observed on our plot for pairwise MSE (Figure~\ref{fig:sushi_mse} in Appendix~\ref{sec:mse}).
For both models, the pairwise accuracy peaks at $l=6$ or $7$. 
For larger $l$, the bottom-ranked alternatives are usually noisy because agents do not have strong preferences over alternatives they dislike~\cite{rigby2006modeling}. 

\subsection{Learning $k$-$\plx$ from Synthetic Data}

\begin{figure*}[htp]
    \centering
    \includegraphics[width = 0.7\textwidth]{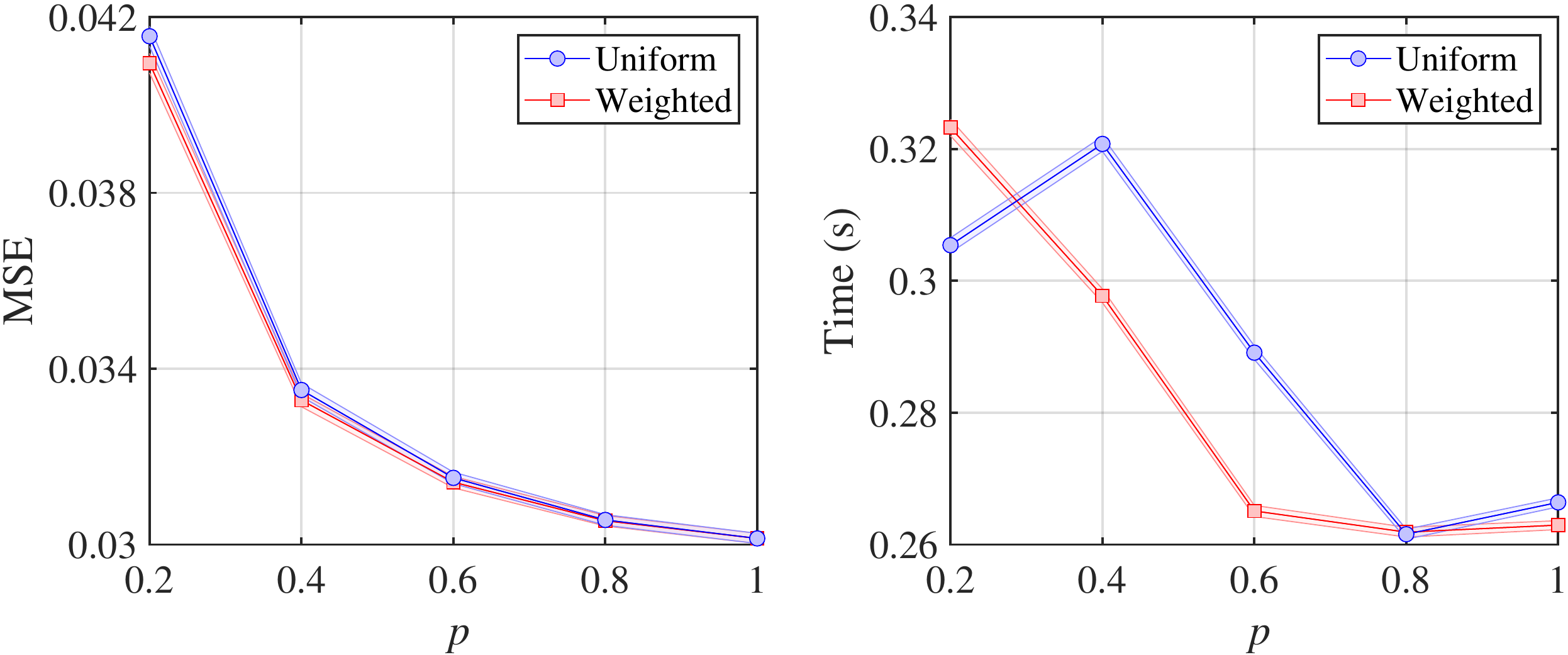}
    %\vspace{-1em}
    \caption{The MSE and running time with 95\% confidence intervals for $\text{RUM}_{\mathcal{X}}$ given $l$-way rankings on synthetic dataset of $d=6$ alternative features.} 
    \label{fig:RUM_feature}
\end{figure*}

{\noindent\bf Setup.} We still fix $m=10$ and $d=10$. For each agent and each alternative, the feature vector is generated in $[-1,1]$ uniformly at random. Each component in $\vec\beta$ is generated uniformly at random in $[-2, 2]$. For $2$-$\plx$, each component of the mixing coefficients $\vec\alpha$ is generated uniformly at random and then normalized. 

All algorithms were implemented in MATLAB (with the built-in \texttt{fmincon} for MLE for $k$-$\plx$) and tested on a Ubuntu Linux server with Intel Xeon
E5 v3 CPUs each clocked at 3.50 GHz. Results are shown in Figure~\ref{fig:2plx}. Values are computed by averaging over $2000$ trials.

\paragraph{Observations.} 
Figure~\ref{fig:2plx} illustrates the comparison between MLE computed using MATLAB \texttt{fmincon} function and the EM algorithm (Algorithm~\ref{alg:em}) for $2$-$\plx$. We observe that EM outperforms MLE w.r.t.~both statistical efficiency and computational efficiency. Therefore, EM might be a more favorable choice in practice, despite that no theoretical guarantee about its convergence is known.

\subsection{Additional Settings and Results from Real-World Experiments}

\begin{figure*}[htp]
    \centering
    \includegraphics[width = 0.985\textwidth]{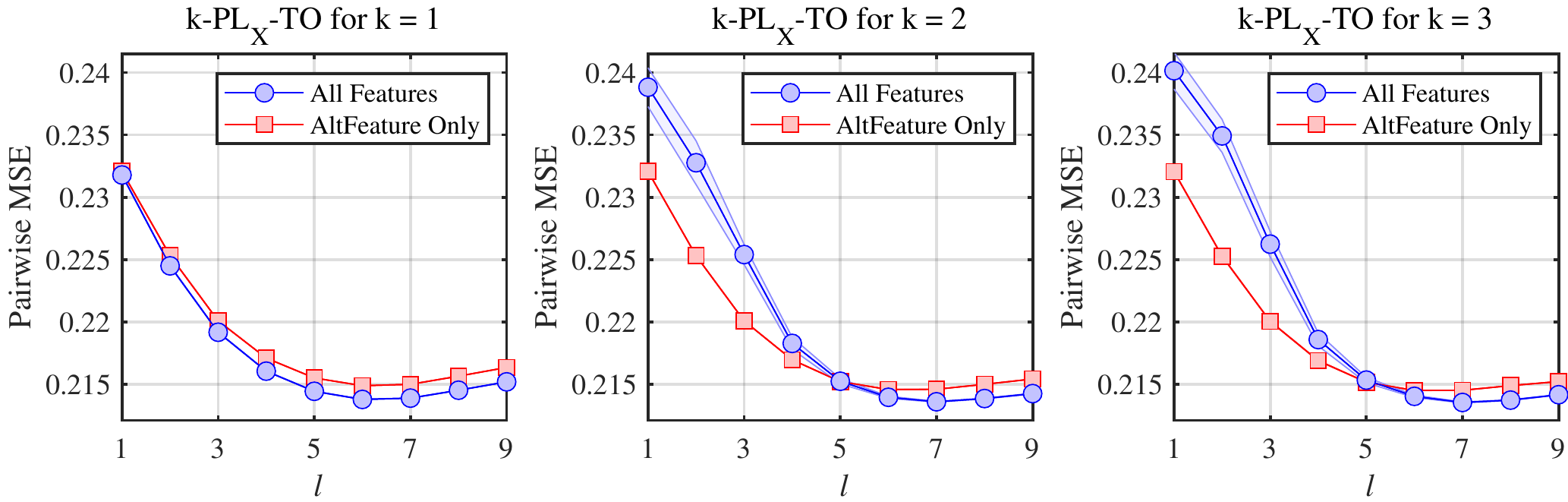}
    \vspace{-1em}
    \caption{The pairwise MSE with 95\% confidence intervals for $k$-$\plxp$ given top-$l$ rankings on sushi dataset (top-9 means full rankings).} 
    \label{fig:sushi_mse}
\end{figure*}

\subsubsection{Detailed settings}
As mentioned in Section~\ref{sec:real_world}, we used four kinds of agent features and four kinds of alternative features in sushi dataset. The alternative features are
\begin{enumerate}
\item the heaviness/oiliness in taste in $\{0,1,2,3,4\}$, where 0 means heavy or oily;
\item how frequently the user eats the SUSHI in  $\{0,1,2,3\}$, where 3 means frequently eat;
\item the normalized price;
\item how frequently the SUSHI is sold in sushi shop in $(0,1)$, where 1 means the most frequently.
\end{enumerate}

The agent features are
\begin{enumerate}
\item gender in  $\{0, 1\}$, where  0 means male and 1 means female;
\item age in range $\{0,1,2,3,4,5\}$, where 0 means 15-19; 1 means 20-29; 2 means 30-39; 3 means 40-49; 4 means 50-59; and 5 means 60 or elder;
\item the total time need to fill questionnaire form;
\item constant feature which is always 1.
\end{enumerate}
In figure~\ref{fig:sushi_acc}, every data point is the average of 26 independent experiments on 5-fold cross-validation, which means we run 130 times of training-test process. The confidence interval is calculated by the standard way of t-test.

\subsubsection{Definition of pairwise accuracy and pairwise MSE}\label{sec:accuracy}
Let $\vec\theta$ denote the parameters of $k$-$\plx$ and $R = [a_{i_1}\succ\cdots\succ a_{i_m}]$ denote a full ranking. The pairwise accuracy of $k$-$\plx$ on $R$ is the average of $k$-$\plx$'s prediction accuracy on all pairwise comparisons. Mathematically,
\begin{equation}\nonumber
\hat{\text{PA}}(R|\vec\theta) = \frac{1}{\binom{m}{2}}\sum_{\ell<\ell'} \mathbbm{1}\left({\Pr}_{k\text{-}\plx}\big(a_{i_{\ell}}\succ a_{i_{\ell'}} | \vec\theta\big) > 0.5 \right),
\end{equation}
where $\mathbbm{1}(\cdot)$ is the indicator function defined as follows
\begin{equation}\nonumber
\begin{split}
\mathbbm{1}(\kappa)&\triangleq\left\{\begin{array}{lcl}
1      &      & \text{if } \kappa \text{ is true}\\
0\;\;\;      &      & \text{otherwise}
\end{array}\right..\\
\end{split}
\end{equation}
For any profile $P = (R_1,\cdots,R_p)$, its pairwise accuracy is defined as the average accuracy of all rankings in it. Mathematically,
\begin{equation}\nonumber
\hat{\text{PA}}(P|\vec\theta) = \frac{1}{p}\sum_{j=1}^p \hat{\text{PA}}(R_j|\vec\theta).
\end{equation}
Similarly, using the same notations as above, pairwise MSE of ranking $R$ is defined as
\begin{equation}\nonumber
\hat{\text{PMSE}}(R|\vec\theta) = \frac{1}{\binom{m}{2}}\sum_{\ell<\ell'} \left(1-{\Pr}_{k\text{-}\plx}\big(a_{i_{\ell}}\succ a_{i_{\ell'}} | \vec\theta\big)\right)^2.
\end{equation}
For any profile $P = (R_1,\cdots,R_p)$, its pairwise MSE is defined as the average MSE of all rankings in it. Mathematically,
\begin{equation}\nonumber
\hat{\text{PMSE}}(P|\vec\theta) = \frac{1}{p}\sum_{j=1}^p \hat{\text{PMSE}}(R_j|\vec\theta).
\end{equation}

\subsubsection{Sushi dataset bechmarked by Pairwise MSE}\label{sec:mse}
Figure~\ref{fig:sushi_mse} plots the pairwise MSE of $k$-$\plxp$ with the same setting as Figure~\ref{fig:sushi_acc}. All our observations for pairwise accuracy in Section~\ref{sec:real_world} can also be observed in the plot for pairwise MSE. We note that the optimal pairwise MSE of $k$-$\plxp$ ($l=6$ for $k=1$ and $l=7$ for $k=2,3$)  becomes better when the number of clusters $k$ increases. 

\subsection{How likely is the full row rank condition violated?}~\label{app:rank}
We use sushi dataset and the same setting of agent/alternative feature as Section~\ref{sec:real_world}. We randomly sample $n$ agents from the sushi dataset without replacement. We define $p_{\text{FRR}}(n)$ as the probability that the full row rank condition is violated.  Figure~\ref{fig:rank} shows that $p_{\text{FRR}}(n)$ decays exponentially with $n$. Especially, when $n \geq 10$, $p_{\text{FRR}}(n) \leq 2.3\times 10^{-3}$, which means the full row rank condition is very unlikely violated when $n\geq 10$.

\begin{figure}[htp]
    \centering
    \includegraphics[width=0.46\textwidth]{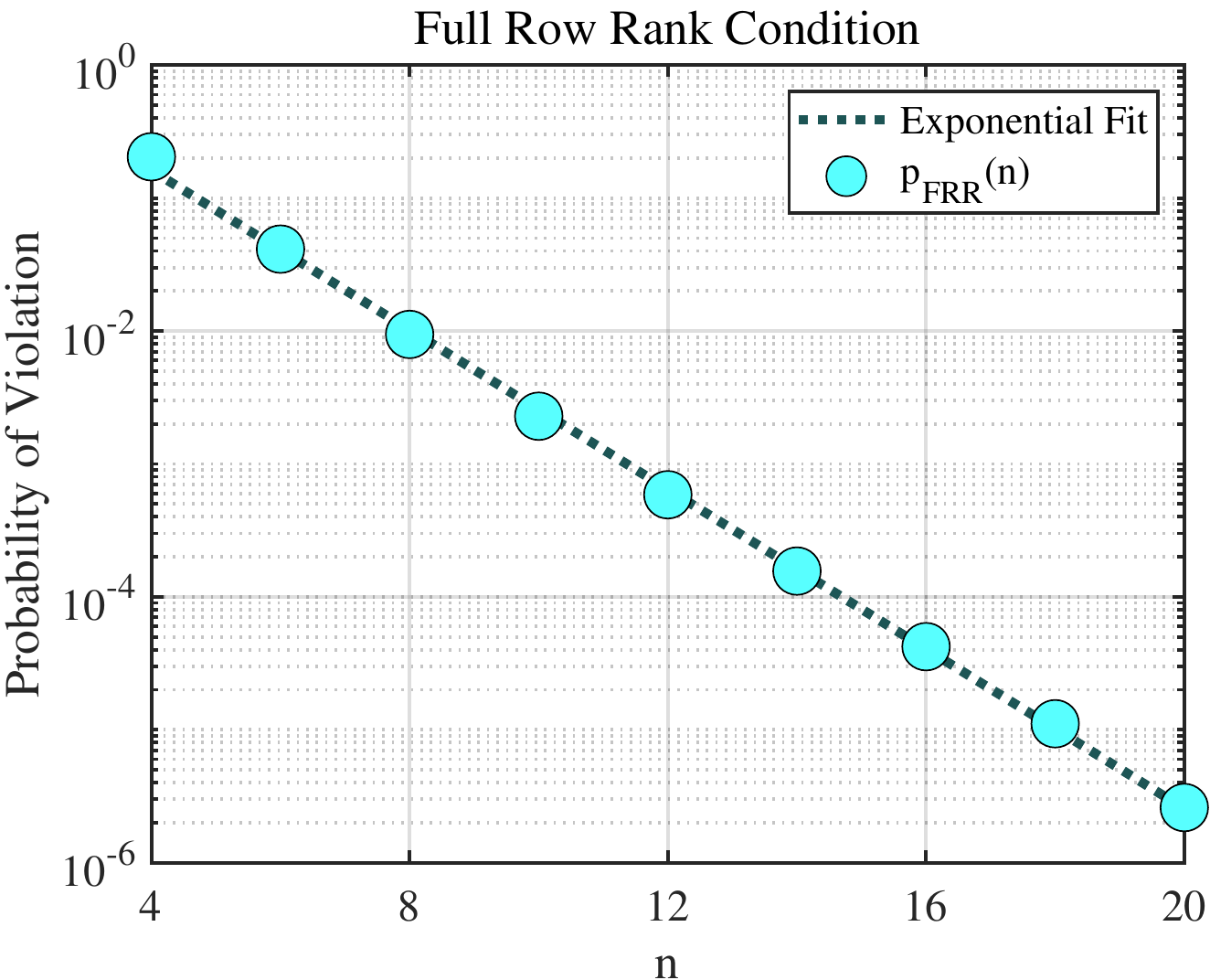}
    \caption{The experimental verification of the full row rank condition. Note that the vertical axis is in log-scale. We run $10^7$ independent trials for each data point.}\label{fig:rank}
\end{figure}

\subsection{Learning $\text{RUM}_{\mathcal{X}}$ from Synthetic Data}
We fix $m=10$ and $d=6$. For each agent and each alternative, the feature vector is generated in $[0,1]$ uniformly at random. Each component in $\vec\beta$ is also generated uniformly at random in $[0, 1]$. Each alternative is included in the $l$-way order with $p$ probability. %For $2$-$\plx$, each component of the mixing coefficients $\vec\alpha$ is generated uniformly at random and then normalized. 
All algorithms were implemented in MATLAB (with the built-in \texttt{fmincon} for MLE) and tested on a Windows 11 desktop with AMD 2700X CPUs each clocked at 4.0 GHz. Results are shown in Figure~\ref{fig:RUM_feature}. Values are computed by averaging over $5000$ trials.

\paragraph{Observations.} 
Figure~\ref{fig:RUM_feature} illustrates the comparison between MLE computed using uniform or weighted CLL. We observe that weighted outperforms uniform w.r.t.~both statistical efficiency and computational efficiency (except the computational efficiency for $p=0.2$). Therefore, adding weight might be a more favorable choice in practice. %despite that no theoretical guarantee about its convergence is known.

\section{Proofs}

\subsection{Useful Lemmas}\label{sec:lemmas}
We first show a lemma, which will be frequently used in the proofs of theorems in this paper.

\begin{lem}\label{lem:rank}
For any model $\plx$ where $\mx$ has $d$ rows, we have $d=\rank(\mx')$ if and only if there does not exist $\vec\beta^{(1)}\ne\vec\beta^{(2)}$, s.t. for any $j\in\{1, \ldots, n\}$ and any $i_1, i_2\in\{1, \ldots, m\}$ where $i_1\ne i_2$, $$(\vec\beta^{(1)}-\vec\beta^{(2)})(\vec x_{ji_1}-\vec x_{ji_2})=0.$$
\end{lem}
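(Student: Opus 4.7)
The plan is to reduce the statement to a standard fact from linear algebra by introducing $\vec v := \vec\beta^{(1)} - \vec\beta^{(2)}$. Under this substitution, the right-hand side condition becomes: there is no nonzero $\vec v \in \mathbb R^d$ such that $\vec v^\top(\vec x_{ji_1} - \vec x_{ji_2}) = 0$ for every $j \in \{1,\ldots,n\}$ and every $i_1 \ne i_2$ in $\{1,\ldots,m\}$. Equivalently, the orthogonal complement of the span $S := \operatorname{span}\{\vec x_{ji_1} - \vec x_{ji_2} : j \le n,\ i_1 \ne i_2\} \subseteq \mathbb R^d$ is trivial, i.e.\ $S = \mathbb R^d$. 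So the lemma reduces to showing $S = \mathbb R^d$ if and only if $\bignorm(\mx)$ has full row rank $d$.

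The key step is to verify that $S$ equals the column span of $\bignorm(\mx)$. One inclusion is immediate: by definition of $\bignorm(\mx)$ in \eqref{eq:xprime}, each of its columns has the form $\vec x_{ji} - \vec x_{j1}$ with $i \ge 2$, which is a special case of $\vec x_{ji_1} - \vec x_{ji_2}$, hence lies in $S$. For the reverse inclusion, I would use the identity
\[
\vec x_{ji_1} - \vec x_{ji_2} = (\vec x_{ji_1} - \vec x_{j1}) - (\vec x_{ji_2} - \vec x_{j1}),
\]
which (taking the convention that the term vanishes when $i_1$ or $i_2$ equals $1$) expresses every generator of $S$ as a linear combination of two columns of $\bignorm(\mx)$. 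Thus $S$ coincides with the column span of $\bignorm(\mx)$.

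Having identified these two subspaces, the claim follows from rank-nullity: the left null space $\{\vec v : \vec v^\top \bignorm(\mx) = \vec 0\}$ is trivial if and only if the column span of $\bignorm(\mx)$ has dimension $d$, which is exactly the condition $\rank(\bignorm(\mx)) = d$ since $\bignorm(\mx)$ has $d$ rows. Combining this with the reduction in the first paragraph gives both directions of the equivalence simultaneously.

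I do not anticipate a main obstacle here; the statement is essentially a bookkeeping lemma translating the ``no nontrivial difference of parameters can be uniformly annihilated by feature differences'' condition into the full-row-rank condition on $\bignorm(\mx)$. The only subtle point worth stating explicitly in the write-up is the step that rewrites arbitrary pairwise differences $\vec x_{ji_1} - \vec x_{ji_2}$ as differences of normalized features with baseline $\vec x_{j1}$, ensuring that no information is lost by taking $a_1$ as the baseline in the definition of $\bignorm(\mx)$.
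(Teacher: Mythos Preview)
Your proposal is correct and follows essentially the same approach as the paper: both arguments reduce to the substitution $\vec v=\vec\beta^{(1)}-\vec\beta^{(2)}$, identify the span of all pairwise differences $\vec x_{ji_1}-\vec x_{ji_2}$ with the column span of $\bignorm(\mx)$ via the identity $\vec x_{ji_1}-\vec x_{ji_2}=(\vec x_{ji_1}-\vec x_{j1})-(\vec x_{ji_2}-\vec x_{j1})$, and then invoke the equivalence between trivial left null space and full row rank. The paper presents the two directions separately by contradiction, whereas you package them together via rank--nullity, but the mathematical content is the same.
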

\begin{proof}
Because $\bignorm(\mx)$ has $d$ rows, it always holds that $d\ge\rank(\bignorm(\mx))$. Let $\vec 0$ denote a zero vector of appropriate dimension.

\noindent{\bf ``if" direction.} For the purpose of contradiction suppose $d>\rank(\bignorm(\mx))$. Then there exists vector $\vec\Delta\ne\vec 0$ s.t. $\bignorm(\mx)^\top\cdot\vec\Delta=\vec 0$. We construct $\vec\beta^{(1)}$ and $\vec\beta^{(2)}$ s.t. $\vec\beta^{(1)}-\vec\beta^{(2)}=\vec\Delta$. If $i_1=1$ or $i_2=1$, then either $\vec x_{ji_1}-\vec x_{ji_2}$ or $\vec x_{ji_2}-\vec x_{ji_1}$ is one column of $\bignorm(\mx)$. Therefore, we found $\beta^{(1)}$ and $\beta^{(2)}$ s.t. $(\vec\beta^{(1)}-\vec\beta^{(2)})(\vec x_{ji_1}-\vec x_{ji_2})=0$, which is a contradiction. If neither of $i_1$ and $i_2$ is $1$, then by taking out two columns in $\bignorm(\mx)$, we have
\begin{align*}
&(\vec\beta^{(1)}-\vec\beta^{(2)})(\vec x_{ji_1}-\vec x_{j1})=0\\
&(\vec\beta^{(1)}-\vec\beta^{(2)})(\vec x_{ji_2}-\vec x_{j1})=0.
\end{align*}
We get $(\vec\beta^{(1)}-\vec\beta^{(2)})(\vec x_{ji_1}-\vec x_{ji_2})=0$ by subtracting one from the other, which is a contradiction.

\noindent{\bf "only if" direction.} For the purpose of contradiction suppose there exist $\vec\beta^{(1)}, \vec\beta^{(2)}, j, i_1, i_2$ s.t. for any $j\in\{1, \ldots, n\}$ and any $i_1, i_2\in\{1, \ldots, m\}$ where $i_1\ne i_2$, $(\vec\beta^{(1)}-\vec\beta^{(2)})(\vec x_{ji_1}-\vec x_{ji_2})=0$. Then there exist a nonzero vector $\vec\beta^{(1)}-\vec\beta^{(2)}$ s.t. $\bignorm(\mx)^\top\cdot(\vec\beta^{(1)}-\vec\beta^{(2)})$, which implies that $\bignorm(\mx)$ is not full rank, i.e., $\rank(\bignorm(\mx))<d$, which is a contradiction.
\end{proof}

%The next lemma shows that MLE is bounded when Assumption~\ref{ass:bound} holds.

The next lemma will be used in the proof of the RMSE bound for MLE (Theorem~\ref{thm:msebound}).

\begin{lem}\label{lem:gradzero}
Given $\plxp$ with ground truth parameter $\vec\beta_0$. Let $L(\vec\beta)=\sum^n_{j=1}\ln\Pr\nolimits_{\plx}(R_j|\vec\beta)$, which is the objective function of MLE defined in~\eqref{eq:mle}. We have $E[\nabla L(\vec\beta_0)]=\vec 0$, with the expectation of each summand being zero, where the expectation is taken over orders generated from $\plxp$.
\end{lem}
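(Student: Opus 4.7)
\textbf{Proof plan for Lemma~\ref{lem:gradzero}.} The plan is to reduce the claim to the standard \emph{score function identity} applied separately to each agent $j$. By linearity of expectation,
\[
E[\nabla L(\vec\beta_0)] \;=\; \sum_{j=1}^n E\bigl[\nabla \ln \Pr\nolimits_{\plx}(R_j\mid \vec\beta_0)\bigr],
\]
where the expectation on the right is taken with respect to $R_j$ drawn from $\Pr_{\plx}(\cdot\mid\vec\beta_0)$ using agent $j$'s feature matrix $X_j$. Since the lemma asks for \emph{each summand} to vanish, it suffices to show, for every fixed $j$, that the score at the true parameter has mean zero.

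For the single-agent step I would use the usual likelihood identity. Because the sample space $\ml(\ma)$ is finite (just $m!$ linear orders), differentiation can be pulled through the sum without any integrability concerns:
\[
E\bigl[\nabla \ln \Pr\nolimits_{\plx}(R_j\mid \vec\beta_0)\bigr]
= \sum_{R\in\ml(\ma)} \Pr\nolimits_{\plx}(R\mid\vec\beta_0)\cdot\frac{\nabla \Pr\nolimits_{\plx}(R\mid\vec\beta_0)}{\Pr\nolimits_{\plx}(R\mid\vec\beta_0)}
= \nabla \sum_{R\in\ml(\ma)} \Pr\nolimits_{\plx}(R\mid\vec\beta_0) = \nabla 1 = \vec 0.
\]
The swap of $\nabla$ and the finite sum is the only analytic step, and it is immediate. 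The denominator $\Pr_{\plx}(R\mid\vec\beta_0)$ is strictly positive for every $R$ by Definition~\ref{def:fpl}, so dividing by it is valid for every term in the expectation.

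There is essentially no hard step here; the lemma is the standard fact that the expected score at the truth is zero. The only thing to be careful about is that the $n$ summands of $L$ are evaluated at independent draws $R_j$ from \emph{different} distributions (agents have different feature vectors $\vec x_{ji}$), but this does not affect the argument because the score identity is applied separately to each agent's distribution, and the final sum of zero vectors is the zero vector. If later one wishes to extend to $\plxp$ (top-$l$ orders), the same computation goes through verbatim with $\ml(\ma)$ replaced by the corresponding finite set of top-$l$ orders and $\phi_{l}$ entering as a multiplicative constant that factors out of the gradient with respect to $\vec\beta$.
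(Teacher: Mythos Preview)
Your argument is correct and is the cleaner route: you invoke the score-function identity once per agent over the finite sample space, which immediately gives zero expected gradient for each summand indexed by $j$. The paper instead exploits the sequential structure of Plackett--Luce: it decomposes $\ln\Pr_{\plx}(R_j\mid\vec\beta)$ further into the individual top-choice events (choosing $a_i$ from the remaining set $\ma'$ at each stage), writes out the gradient of each such term explicitly as
\[
\nabla_r l(\vec\beta)=x_{ji,r}-\frac{\sum_{a_{i'}\in\ma'}x_{ji',r}\exp(\vec\beta\cdot\vec x_{ji'})}{\sum_{a_{i'}\in\ma'}\exp(\vec\beta\cdot\vec x_{ji'})},
\]
and verifies by direct computation that its conditional expectation (given $\ma'$) vanishes at $\vec\beta_0$. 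Your approach is more general and needs no PL-specific algebra; the paper's approach yields the explicit form of each stage-wise gradient term and the finer statement that every choice-stage contribution has mean zero, which is convenient downstream in the proof of Theorem~\ref{thm:msebound}, where that same explicit form is reused to bound each term by $c$ before applying Hoeffding's inequality.
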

\begin{proof}
We focus on an event of ranking any alternative $a_i$ at the top among any subset of alternatives $\ma'$ where $a_i\in\ma'$ since $L$ is constructed of such events. Let $l(\vec\beta)$ denote the likelihood of such an event. We have
$$
l(\vec\beta)=\vec\beta\cdot\vec x_{ji}-\ln\sum_{a_{i'}\in\ma'}\exp(\vec\beta\cdot\vec x_{ji'})
$$
and
$$
\nabla_r l(\vec\beta)=x_{ji, r}-\frac {\sum_{a_{i'}\in\ma'}x_{ji', r}\exp(\vec\beta\cdot\vec x_{ji'})} {\sum_{a_{i'}\in\ma'}\exp(\vec\beta\cdot\vec x_{ji'})}
$$
The probability of each alternative being ranked at the top is $\frac {\exp(\vec\beta_0\cdot\vec x_{ji})} {\sum_{a_{i'}\in\ma'}
\exp(\vec\beta_0\cdot\vec x_{ji'})}$. Therefore, for any $r\in\{1, \ldots, d\}$,
\begin{align*}
E[\nabla_r l(\vec\beta)]&=\frac {\sum_{a_i\in\ma'}x_{ji, r}\exp(\vec\beta_0\cdot\vec x_{ji})} {\sum_{a_{i'}\in\ma'}
\exp(\vec\beta_0\cdot\vec x_{ji'})}\\
&-\frac {\sum_{a_{i'}\in\ma'}x_{ji', r}\exp(\vec\beta\cdot\vec x_{ji'})} {\sum_{a_{i'}\in\ma'}\exp(\vec\beta\cdot\vec x_{ji'})}
\end{align*}
It's easy to see $E[\nabla_r l(\vec\beta_0)]=0$. Therefore $E[\nabla L(\vec\beta_0)]=\vec 0$.
\end{proof}

\subsection{Proof of Theorem~\ref{thm:idplx}}
\label{sec:proofthmidplx}
\appThm{thm:idplx}{
For any $\mx$,  
$\plxp$ is identifiable if and only if $\bignorm(\mx)$ has full row rank. }
\begin{proof}

We first show that the $\vec\phi$ part is always identifiable because different $\vec\phi$ parameters lead to different distributions over top-$l$ structures. Then we prove the theorem for the $\vec\beta$ part  %$d=\rank(\bignorm(\mx))$ is a necessary and sufficient condition for the $\vec\beta$ parameter to be identifiable 
by analyzing the events of top choices over a subset of alternatives, observing that the probability of any top-$l$ order is the product of several probabilities of top choices over a subset of alternatives. %The full proof can be found in Appendix~\ref{sec:proofthmidplx}.

Formally, the $\vec\phi$ parameter is always identifiable because for any different $\vec\phi$, the distribution of structures will be different, which contradicts the definition of identifiability. We only need to prove that $\vec\beta$ parameter is identifiable. 

{\bf ``if" direction.} It is not hard to see that if $\vec\beta$ is identifiable under the case where $\phi_{1}=1$ and $\phi_l=0$ for all $l\ge 2$ (the model generates top-$1$ orders only), $\vec\beta$ is identifiable for all any appropriate $\vec\phi$. So we focus on the $\phi_{1}=1$ case. For the purpose of contradiction suppose $\vec\beta$ parameter is not identifiable. There exist $\vec\beta^{(1)}\ne\vec\beta^{(2)}$ leading to the same distribution over top-$1$ orders. Then for any $j$ and any $i_1\ne i_2$, we have 
$
\frac{\exp({\vec\beta^{(1)}\cdot\vec x_{ji_1}})}{\sum^m_{i=1}\exp(\vec\beta^{(1)}\cdot\vec x_{ji})}
=\frac{\exp({\vec\beta^{(2)}\cdot\vec x_{ji_1}})}{\sum^m_{i=1}\exp(\vec\beta^{(2)}\cdot\vec x_{ji})}$ and $
\frac{\exp({\vec\beta^{(1)}\cdot\vec x_{ji_2}})}{\sum^m_{i=1}\exp(\vec\beta^{(1)}\cdot\vec x_{ji})}
=\frac{\exp({\vec\beta^{(2)}\cdot\vec x_{ji_2}})}{\sum^m_{i=1}\exp(\vec\beta^{(2)}\cdot\vec x_{ji})}$.
This simplifies to
\begin{equation}\label{eq:prop}
\frac{\exp({\vec\beta^{(1)}\cdot\vec x_{ji_1}})}{\exp(\vec\beta^{(1)}\cdot\vec x_{ji_2})}=\frac{\exp({\vec\beta^{(2)}\cdot\vec x_{ji_1}})}{\exp(\vec\beta^{(2)}\cdot\vec x_{ji_2})},
\end{equation}
which further simplifies to $\exp({\vec\beta^{(1)}(\vec x_{ji_1}-\vec x_{ji_2}})=\exp({\vec\beta^{(2)}(\vec x_{ji_1}-x_{ji_2})})$, and therefore $(\vec\beta^{(1)}-\vec\beta^{(2)})(\vec x_{ji_1}-\vec x_{ji_2})=0.$

By Lemma~\ref{lem:rank} (see Appendix~\ref{sec:lemmas}), $d>\text{rank}(\bignorm(\mx))$, which is a contradiction.

{\bf ``only if" direction.} For the purpose of contradiction suppose $d>\rank(\bignorm(\mx))$. By Lemma~\ref{lem:rank} (see Appendix~\ref{sec:lemmas}), there exists $\vec\beta^{(1)}\ne\vec\beta^{(2)}$, s.t. for any $j\in\{1, \ldots, n\}$ and any $i_1, i_2\in\{1, \ldots, m\}$ where $i_1\ne i_2$, $(\vec\beta^{(1)}-\vec\beta^{(2)})(\vec x_{ji_1}-\vec x_{ji_2})=0.$ This implies \eqref{eq:prop}. Now we focus on an event $E$, which is selecting an alternative $a_i$ from a subset of alternatives $\ma'$ where $a_i\in\ma'$. Due to \eqref{eq:prop}, for any alternative $a_{i'}\in\ma'$ and $a_{i'}\ne a_i$,

$$\frac{\exp({\vec\beta^{(1)}\cdot\vec x_{ji}})}{\exp(\vec\beta^{(1)}\cdot\vec x_{ji'})}=\frac{\exp({\vec\beta^{(2)}\cdot\vec x_{ji}})}{\exp(\vec\beta^{(2)}\cdot\vec x_{ji'})}$$

It's not hard to see
$$\frac{\exp({\vec\beta^{(1)}\cdot\vec x_{ji}})}{\sum_{a_{i'}\in\ma'}\exp(\vec\beta^{(1)}\cdot\vec x_{ji'})}=\frac{\exp({\vec\beta^{(2)}\cdot\vec x_{ji}})}{\sum_{a_{i'}\in\ma'}\exp(\vec\beta^{(2)}\cdot\vec x_{ji'})}$$
which indicates $\Pr\nolimits_{\plx}(E|\vec\beta^{(1)})=\Pr\nolimits_{\plx}(E|\vec\beta^{(2)})$, where $E$ can be any top-$1$ order over any subset of alternatives. Then it is easy to see that for any top-$l$ order $O$, we have $\Pr\nolimits_{\plxp}(O|\vec\beta^{(1)})=\Pr\nolimits_{\plxp}(O|\vec\beta^{(2)})$ by definition of $\plxp$. The model is not identifiable, which is a contradiction.
\end{proof}

\subsection{Proof of Corollary~\ref{coro:idbilinear}}
\label{sec:proof:thm:idbilinear}

\appCoro{coro:idbilinear}{
For any model $\plyzp$, where $Y$ is an $L$-by-$n$ matrix and $Z$ is a $K$-by-$m$ matrix, $\plyzp$ is identifiable if and only if both $Y$ and $\norm(Z)$ have full row rank.}
\begin{proof}
Since $\plyzp$ is a special case of $\plxp$ by letting $\mx=Y\otimes Z$ and $d=K\times L$ ($K$ and $L$ are the number of rows in $Y$ and $Z$, respectively.), it is not hard to see that $\bignorm(\mx)=Y\otimes\norm(Z)$. By Theorem~\ref{thm:idplx}, $\plx$ is identifiable if and only if $d=\rank(\bignorm(\mx))$. Therefore, $\plyzp$ is identifiable if and only if $\rank(Y\otimes \norm(Z))=K\times L$. Due to \citep[Theorem 4.2.15]{Roger94:Topics}, $\rank(Y\otimes \norm(Z))=\rank(Y)\rank(\norm(Z))$. Also due to the fact that $\rank(\norm(Z))\le K$ and $\rank(Y)\le L$, we have $\rank(Y\otimes \norm(Z))=K\times L$ if and only if $\rank(\norm(Z))=K$ and $\rank(Y)=L$.
\end{proof}

\subsection{Proof of Theorem~\ref{thm:idkplx}}
\label{sec:proofidmixture}

\appThm{thm:idkplx}{
If $k$-PL is identifiable, then for any $\mx$ such that $\bignorm(\mx)$ has full row rank,  any parameter $(\vec\alpha, \vec\beta, \vec\phi)$ with $\phi_{m-1}>0$ is identifiable in $k$-$\plxp$.}
\begin{proof} The
$\vec\phi$ parameter is always identifiable because a different $\vec\phi'$ will lead to a different distribution over structures of partial orders. We only need to prove that the remaining parts of the parameter ($\vec\alpha$, $\vec\beta^{(r)}$ for each $r\in\{1, \ldots, k\}$) are identifiable.

Let $\vec\gamma=(\vec\phi, \vec\alpha, \vec\beta^{(1)}, \ldots, \vec\beta^{(k)})$. For the purpose of contradiction suppose there exists another parameter $\vec\gamma'=(\vec\phi, \vec\alpha', \vec\beta'^{(1)}, \ldots, \vec\beta'^{(k)})$ s.t. $\vec\gamma\ne\vec\gamma'$ and $\vec\gamma$ and $\vec\gamma'$ lead to the same distribution over rankings for each agent. For each $j\in\{1, \ldots, n\}$, each $t\in\{1, \ldots, k\}$, and each $i\in\{1, \ldots, m\}$ we define $\theta^{(j, t)}_i=\vec\beta^{(t)}\cdot\vec x_{ji}$ and $\theta'^{(j, t)}_i=\vec\beta'^{(t)}\cdot\vec x_{ji}$, which can be viewed as the standard PL parameters for agent $j$ and alternative $a_i$ for $t$-th component of $k$-$\plxp$. For convenience we define $\vec\theta^{(j, t)}=(\theta^{(j, t)}_1, \ldots, \theta^{(j, t)}_m)$ and $\vec\theta'^{(j, t)}=(\theta'^{(j, t)}_1, \ldots, \theta'^{(j, t)}_m)$.

We now claim that the mixing coefficient parts of $\vec\gamma$ and $\vec\gamma'$ are equal, i.e., $\vec\alpha=\vec\alpha'$ modulo label switching. For the purpose of contradiction suppose the mixing coefficients parts are different, then for each agent $j$, there exist two $k$-PL parameters that lead to the same distribution over rankings. This contradicts the condition that $k$-PL is identifiable.

Still due to identifiability of $k$-PL, given any agent $j\in\{1, \ldots, n\}$ and any component $t\in\{1, \ldots, k\}$, $\vec\theta^{(j, t)}$ and $\vec\theta'^{(j, t)}$ must also be the same modulo parameter shifting, which means that for any $i\in\{2, \ldots, m\}$, $\theta^{(j, t)}_i-\theta^{(j, t)}_1=\theta'^{(j, t)}_i-\theta'^{(j, t)}_1$. This implies that $\norm(X_j)^
\top\vec\beta^{(t)}=\norm(X_j)^\top\vec\beta'^{(t)}$. Since this holds for all $j\in\{1, \ldots, n\}$, we have $\bignorm(\mx)^
\top(\vec\beta^{(t)}-\vec\beta'^{(t)})=\vec 0$. Since $\bignorm(\mx)$ has full row rank, we have $\vec\beta^{(t)}=\vec\beta'^{(t)}$, which holds for all $t\in\{1, \ldots, k\}$. Therefore, we have $\vec\gamma=\vec\gamma'$, which is a contradiction.
\end{proof}

\subsection{Proof of Theorem~\ref{thm:log-concavity}}
\label{sec:proofthmconcavity}
\appThm{thm:log-concavity}{
For any $\plxp$ and any data $P=(O_1, \ldots, O_n)$, the log likelihood function in \eqref{eq:mle} is strictly concave if and only if $\bignorm(\mx)$ has full row rank.}
\begin{proof}
This proof consists of two parts: (I)  for any $j\in\{1, \ldots, n\}$, $\ln\Pr_{\plx}(O_j|\vec\beta)$ is concave, and (II) there exists $j\in\{1, \ldots, n\}$ s.t. $\ln\Pr_{\plx}(O_j|\vec\beta)$ is strictly concave if and only if $\bignorm(\mx)$ has full rank. 

\noindent{\bf Part I.} Due to Definition~\ref{def:fpl}, it is sufficient to prove that for any subset of alternatives $\ma'\subset\ma$ and any $a_i\in\ma'$, $\ln\frac {\exp(\vec\beta\cdot\xji)} {\sum_{a_{i'}\in\ma'}\exp(\vec\beta\cdot\xjp{i'})}$ is concave.

Let $f(\vec\beta)=\ln\frac {\exp(\vec\beta\cdot\xji)} {\sum_{a_{i'}\in\ma'}\exp(\vec\beta\cdot\xjp{i'})}=\vec\beta\cdot\xji-\ln\sum_{a_{i'}\in\ma'}\exp(\vec\beta\cdot\xjp{i'})$. The first term $\vec\beta\cdot\xji$ is concave (linear). It is sufficient to prove $-\ln\sum_{a_{i'}\in\ma'}\exp(\vec\beta\cdot\xjp{i'})$ is also concave. Let $g(\vec\beta)=\ln\sum_{a_{i'}\in\ma'}\exp(\vec\beta\cdot\xjp{i'})$. Our goal is to prove that for any $\vec\beta^{(1)}\ne\vec\beta^{(2)}$, 
$\frac 1 2(g(\vec\beta^{(1)})+g(\vec\beta^{(2)}))>g(\frac {\vec\beta^{(1)}+\vec\beta^{(2)}} 2)$. We compute the difference between the left hand side and the right hand side as follows.
\begin{align*}
&g(\vec\beta^{(1)})+g(\vec\beta^{(2)})-2g(\frac {\vec\beta^{(1)}+\vec\beta^{(2)}} 2)\\
=&\ln\sum_{a_{i'}\in\ma'}\exp(\vec\beta^{(1)}\cdot\xjp{i'})+\ln\sum_{a_{i'}\in\ma'}\exp(\vec\beta^{(2)}\cdot\xjp{i'})\\
&-2\ln\sum_{a_{i'}\in\ma'}\exp(\frac {\vec\beta^{(1)}+\vec\beta^{(2)}} 2\cdot\xjp{i'})\\
=&\ln((\sum_{a_{i'}\in\ma'}\exp(\vec\beta^{(1)}\cdot\xjp{i'}))(\sum_{a_{i'}\in\ma'}\exp(\vec\beta^{(2)}\cdot\xjp{i'})))\\
&-\ln(\sum_{a_{i'}\in\ma'}\exp(\frac {\vec\beta^{(1)}+\vec\beta^{(2)}} 2\cdot\xjp{i'}))^2
\end{align*}
Due to monotonicity of $\ln$ function, we only need to show that 
\begin{align*}
&(\sum_{a_{i'}\in\ma'}\exp(\vec\beta^{(1)}\cdot\xjp{i'}))(\sum_{a_{i'}\in\ma'}\exp(\vec\beta^{(2)}\cdot\xjp{i'}))\\
-&(\sum_{a_{i'}\in\ma'}\exp(\frac {\vec\beta^{(1)}+\vec\beta^{(2)}} 2\cdot\xjp{i'}))^2>0.
\end{align*}

We will show that the left-hand-side can be written as a sum of squares, where at least one of them is positive. 
We have
\begin{align}
&(\sum_{a_{i'}\in\ma'}\exp(\vec\beta^{(1)}\cdot\xjp{i'}))(\sum_{a_{i'}\in\ma'}\exp(\vec\beta^{(2)}\cdot\xjp{i'}))\notag\\
&-(\sum_{a_{i'}\in\ma'}\exp(\frac {\vec\beta^{(1)}+\vec\beta^{(2)}} 2\cdot\xjp{i'}))^2\notag\\
=&\sum_{a_{i'}\in\ma'}\exp((\vec\beta^{(1)}+\vec\beta^{(2)})\cdot\xjp{i'})\notag\\
&+\sum_{a_{i'_1}, a_{i'_2}\in\ma',i'_1\ne i'_2}\exp(\vec\beta^{(1)}\cdot\xjp{i'_1}+\vec\beta^{(2)}\cdot\xjp{i'_2}))\notag\\
&-(\sum_{a_{i'}\in\ma'}\exp((\vec\beta^{(1)}+\vec\beta^{(2)})\cdot\xjp{i'})\notag\\
&+\sum_{a_{i'_1}, a_{i'_2}\in\ma',i'_1\ne i'_2}\exp(\frac {\vec\beta^{(1)}+\vec\beta^{(2)}} 2\cdot(\xjp{i'_1}+\xjp{i'_2})))\notag\\
=&\sum_{a_{i'_1}, a_{i'_2}\in\ma',i'_1<i'_2}(\exp(\vec\beta^{(1)}\cdot\xjp{i'_1}+\vec\beta^{(2)}\cdot\xjp{i'_2})\notag\\
&+\exp(\vec\beta^{(1)}\cdot\xjp{i'_2}+\vec\beta^{(2)}\cdot\xjp{i'_1})\notag\\
&-2\exp(\frac {\vec\beta^{(1)}+\vec\beta^{(2)}} 2\cdot(\xjp{i'_1}+\xjp{i'_2})))\notag\\
=&\sum_{a_{i'_1}, a_{i'_2}\in\ma',i'_1<i'_2}(\exp(\frac {\vec\beta^{(1)}\cdot\xjp{i'_1}+\vec\beta^{(2)}\cdot\xjp{i'_2}} 2)\notag\\
&-\exp(\frac {\vec\beta^{(1)}\cdot\xjp{i'_2}+\vec\beta^{(2)}\cdot\xjp{i'_1}} 2))^2\label{eq:conc}
\end{align}
Not we have proved concavity. 

\noindent{\bf Part II.} ``if direction": we need to prove that there exists $j\in\{1, \ldots, n\}$ and $i'_1<i'_2, i'_1, i'_2\in\ma'$ s.t. \eqref{eq:conc} is nonzero. For the purpose of contradiction suppose for any $j=1, \ldots, n$, any distinct $i'_1, i'_2\in\ma$,
\begin{equation*}
\vec\beta^{(1)}\cdot\xjp{i'_1}+\vec\beta^{(2)}\cdot\xjp{i'_2}-\vec\beta^{(1)}\cdot\xjp{i'_2}-\vec\beta^{(2)}\cdot\xjp{i'_1}=0.
\end{equation*}
This simplifies to
\begin{equation*}
(\vec\beta^{(1)}-\vec\beta^{(2)})(\xjp{i'_1}-\xjp{i'_2})=0
\end{equation*}
By Lemma~\ref{lem:rank}, this does not hold for every $j$ and distinct $i'_1, i'_2$ when $d=$rank$(\bignorm(\mx))$, which is a contradiction.

``only if" direction. We prove that if $\bignorm{\mx}$ does not have full rank, the objective function is not strictly concave. Since $\bignorm{\mx}$ does not have full rank, we can find $\vec\beta^{(1)}$ and $\vec\beta^{(2)}$ s.t. $(\vec\beta^{(1)}-\vec\beta^{(2)})(\xjp{i'_1}-\xjp{i'_2})=0$ holds for every agent $j$ and every $i'_1, i'_2$ pair  (Lemma~\ref{lem:rank}). This means there exist $\vec\beta^{(1)}$ and $\vec\beta^{(2)}$ s.t. $\frac 1 2(g(\vec\beta^{(1)})+g(\vec\beta^{(2)}))=g(\frac {\vec\beta^{(1)}+\vec\beta^{(2)}} 2)$. The objective function is not strictly concave.
\end{proof}

\subsection{Proof of Lemma~\ref{lem:bound}}

\appLem{lem:bound}{
For any $\plxp$ and data $P$, if Assumption~\ref{ass:bound} holds then the MLE in~\eqref{eq:mle} is bounded.}
\label{sec:proof:lem:bound}
\begin{proof}
We only need to prove that when any entry of $\vec\beta$ goes to infinity, the objective function in \eqref{eq:mle} $LL(P|\vec\beta)=\sum^n_{j=1}\ln\Pr\nolimits_{\plx}(O_j|\vec\beta)$ goes to negative infinity, i.e., the probability of some ranking in $P$ approaches zero.

In the rest of the proof, we focus on the probability of an event $E_j$, which is $a_{i_1}$ being ranked at the top among a subset of alternatives $\ma'$ which contains both $a_{i_1}$ and $a_{i_2}$ by agent $j$, under $\plx$. We have
\begin{align*}
\Pr\nolimits_{\plx}(E_j|\vec\beta)=&\frac {\exp(\vec\beta\cdot\vec x_{ji_1})} {\exp(\vec\beta\cdot\vec x_{ji_2})+\sum_{a_{i'}\in\ma', i'\ne i_2}\exp(\vec\beta\cdot\vec x_{ji'})}\\
=&\frac {1} {\exp(\vec\beta\cdot(\vec x_{ji_2}-\vec x_{ji_1}))+M},
\end{align*}
where $M$ is positive. We will show that for any $r\in\{1, \ldots, d\}$, under all cases of Assumption~\ref{ass:bound}, we can find an event $E$ s.t. as $\beta_r\rightarrow\pm\infty$, $\Pr\nolimits_{\plx}(E_j|\vec\beta)\rightarrow 0$.

\noindent{\bf Case 1:} for any $r\in\{1, \ldots, d\}$, there exist $j_1, j_2, i_1, i_2$ s.t. $\xi_{j_1, i_1i_2}\xi_{j_2, i_1i_2}>0$, $(x_{j_1i_1, r}-x_{j_1i_2, r})(x_{j_2i_1, r}-x_{j_2i_2, r})<0$.

We only need to consider the case where $\xi_{j_1, i_1i_2}>0$, $\xi_{j_2, i_1i_2}>0$. The other case ($\xi_{j_1, i_1i_2}<0$, $\xi_{j_2, i_1i_2}<0$) can be converted to this case by switching the roles of $i_1$ and $i_2$. Similarly, we only need to consider the case where $x_{j_1i_1, r}-x_{j_1i_2, r}>0$, $x_{j_2i_1, r}-x_{j_2i_2, r}<0$. The other case can be converted to this case by switching the roles of $j_1$ and $j_2$.

In this case, we let $E_j$ be the event of ranking $a_{i_1}$ at the top among a subset of alternatives containing both $a_{i_1}$ and $a_{i_2}$. For any $r\in\{1, \ldots, d\}$, if $\beta_r\rightarrow-\infty$, we have $\exp(\beta_r(x_{j_1i_2, r}-x_{j_1i_1, r}))\rightarrow+\infty$, which means $\Pr(E_{j_1}|\vec\beta)\rightarrow 0$; if $\beta_r\rightarrow+\infty$, we have $\exp(\beta_r(x_{j_2i_2, r}-x_{j_2i_1, r}))\rightarrow+\infty$, which means $\Pr(E_{j_2}|\vec\beta)\rightarrow 0$.

\noindent{\bf Case 2:} for any $r\in\{1, \ldots, d\}$, there exist $j_1, j_2, i_1, i_2$ s.t. $\xi_{j_1, i_1i_2}\xi_{j_2, i_1i_2}<0$, $(x_{j_1i_1, r}-x_{j_1i_2, r})(x_{j_2i_1, r}-x_{j_2i_2, r})>0$.

Again, we only need to consider the case where $\xi_{j_1, i_1i_2}>0$, $\xi_{j_2, i_1i_2}<0$, $x_{j_1i_1, r}-x_{j_1i_2, r}>0$, $x_{j_2i_1, r}-x_{j_2i_2, r}>0$. All the other case can be converted to this case by switching the roles of $j_1$ and $j_2$, $i_1$ and $i_2$, or both.

For any $r\in\{1, \ldots, d\}$, if $\beta_r\rightarrow-\infty$, let $E_j$ be the event of ranking $a_{i_1}$ among a subset of alternatives containing $a_{i_1}$ and $a_{i_2}$. Then we have $\exp(\beta_r(x_{j_1i_2, r}-x_{j_1i_1, r}))\rightarrow+\infty$, which means $\Pr(E_{j_1}|\vec\beta)\rightarrow 0$; if $\beta_r\rightarrow+\infty$, let $E_j$ be the event of ranking $a_{i_2}$ among a subset of alternatives containing $a_{i_1}$ and $a_{i_2}$.
we have $\exp(\beta_r(x_{j_2i_1, r}-x_{j_2i_2, r}))\rightarrow+\infty$, which means $\Pr(E_{j_2}|\vec\beta)\rightarrow 0$.
\end{proof}

\subsection{Proof of Theorem~\ref{thm:msebound}}
\label{sec:msebound}
\appThm{thm:msebound}{
Given any $\plxp$ over $m$ alternatives and $n$ agents with the feature matrix $\mx\in\mathbb R^{d\times mn}$. Define $L(\vec\beta)=\frac 1 n \sum^n_{j=1}\ln\Pr\nolimits_{\plx}(O_j|\vec\beta)$, which is $\frac 1 n$ of the objective function in \eqref{eq:mle}. Let $H(\vec\beta)$ denote the Hessian matrix of $L(\vec\beta)$ and $\lambda_1(\vec\beta)$ be the smallest eigenvalue of $-H(\vec\beta)$. Let $\vec\beta_0$ denote the ground truth parameter and $\vec\beta^*$ denote the estimated parameter that is computed using \eqref{eq:mle}. Define $\lambda_{\min}=\min_{0\le\sigma\le 1}\lambda_1(\sigma\vec\beta^*+(1-\sigma)\vec\beta^0)$.

If \bignorm($\mx$) has full row rank and Assumption~\ref{ass:bound} holds, then for any $0<\delta<1$, with probability $1-\delta$, 
\begin{equation}\label{eq:samplecomplexity1}
||\vec\beta^*-\vec\beta_0||_2\le\frac {\sqrt{8(m-1)^2c^2d\ln(\frac {2d} {\delta})}} {\lambda_{\min}\sqrt{n}},
\end{equation}
where $c$ is the difference between the largest and the smallest entries of $\mx$.}
\begin{proof}
It is easy to see $\vec\beta^*$ defined in $\eqref{eq:mle}$ also maximizes $L(\vec\beta)$.

Let the order $a_{i_{j, 1}}\succ a_{i_{j, 2}}\succ\ldots\succ a_{i_{j, l_j}}\succ\text{others}$ denote the order $O_j$, i.e., $a_{i_{j, p}}$ is the alternative that is ranked at position $p$ by agent $j$. Accordingly, the feature vector for $a_{i_{j, p}}$ is denoted by $\vec x_{ji_{j, p}}$. Then $L(\vec\beta)$ can be written explicitly as
$
L(\vec\beta)=\frac 1 n\sum^n_{j=1}\sum^{l_j}_{p=1}(\vec\beta\cdot\vec x_{ji_{j, p}}-\ln\sum^m_{q=p}\exp(\vec\beta\cdot\vec x_{ji_{j, q}}))
$.

Define $\Delta=\vec\beta^*-\vec\beta_0$ and let $\nabla L(\vec\beta)$ denote the gradient of $L(\vec\beta)$. We have
\begin{align}\label{eq:gradbound}
&L(\vec\beta^*)-L(\vec\beta_0)-\nabla^\top L(\vec\beta^0)\Delta\notag\\
&\ge -\nabla^\top L(\vec\beta^0)\Delta\ge -||\nabla L(\vec\beta^0)||_2||\Delta||_2
\end{align}
where the first inequality is because $L(\vec\beta^*)$ maximize $L(\vec\beta)$ and the second inequality is due to the Cauchy-Schwartz inequality. Let $H(\vec\beta)$ denote the Hessian matrix of $L(\vec\beta)$. Then by the mean value theorem, there exist a $\vec\beta'=\sigma\vec\beta^*+(1-\sigma)\vec\beta_0$ for some $0\le\sigma\le 1$ such that
\begin{align}\label{eq:meanvaluebound}
&L(\vec\beta^*)-L(\vec\beta_0)-\nabla^\top L(\vec\beta^*)\Delta=\frac 1 2\Delta^\top H(\vec\beta')\Delta\notag\\
&\le -\frac 1 2 \lambda_1(-H(\vec\beta'))||\Delta||^2_2\le -\frac 1 2 \lambda_{\min}||\Delta||^2_2
\end{align}
where $\lambda_1(-H(\vec\beta'))$ is the smallest eigenvalue of $-H(\vec\beta')$. Due to Theorem~\ref{thm:log-concavity}, $H(\vec\beta)$ is negative definite because $L(\vec\beta)$ is strictly concave when $d=\rank(\bignorm(\mx))$. Therefore $\lambda_1(-H(\vec\beta'))>0$. Combining \eqref{eq:gradbound} and \eqref{eq:meanvaluebound}, we have
\begin{equation}\label{eq:msebound1}
||\Delta||_2\le\frac {2||\nabla L(\vec\beta_0)||_2} {\lambda_{\min}}
\end{equation}

Now we bound $||\nabla L(\vec\beta_0)||_2$. The $r$-th entry of gradient is
$
\nabla_r L(\vec\beta)
=\frac 1 n\sum^n_{j=1}\sum^{m-1}_{p=1}(x_{ji_{j, p}, r}-\frac {\sum^m_{q=p}x_{ji_{j, q}, r}\exp(\vec\beta\cdot\vec x_{ji_{j, q}})} {\sum^m_{q=p}\exp(\vec\beta\cdot\vec x_{ji_{j, q}})})
$. It is shown by Lemma~\ref{lem:gradzero} that $E[\nabla_r L(\vec\beta_0)]=0$ with each summand expected to be zero. It's not hard to see that each summand $|x_{ji_{j, p}, r}-\frac {\sum^m_{q=p}x_{ji_{j, q}, r}\exp(\vec\beta\cdot\vec x_{ji_{j, q}})} {\sum^m_{q=p}\exp(\vec\beta\cdot\vec x_{ji_{j, q}})}|\le c$, where $c$ is the difference between the largest and the smallest entries of $\mx$. $\nabla_r L(\vec\beta)$ can be viewed as the mean of $n$ random variables bounded in $[-(m-1)c, (m-1)c]$. By Hoeffding's inequality, for any $r\in\{1, \ldots, d\}$ and any $\epsilon>0$,
\begin{equation*}
\Pr(|\nabla_r L_\beta(\vec\beta_0)|\ge \frac {\epsilon} {\sqrt{d}})\le 2\exp(-\frac {n\epsilon^2} {2(m-1)^2c^2d}).
\end{equation*}
Therefore, we have
\begin{equation*}
\Pr(||\nabla L_\beta(\vec\beta_0)||_2\le \epsilon)\ge 1 - 2d\exp(-\frac {n\epsilon^2} {2(m-1)^2c^2d}).
\end{equation*}
This inequality is obtained by applying union bound. Then from \eqref{eq:msebound1}, we have
$$\Pr(||\Delta||_2\le \frac {2\epsilon} {\lambda_{\min}})\ge 1 - 2d\exp(-\frac {n\epsilon^2} {2(m-1)^2c^2d}).$$ Then \eqref{eq:samplecomplexity} is obtained by letting $\delta=2d\exp(-\frac {n\epsilon^2} {2(m-1)^2c^2d})$
\end{proof}

%\appCoro{coro:samplecomplexity}{
%For any $m$ multivariate distributions $\vec \mu = (\mu_1,\ldots,\mu_m)$ and any $\mx$ such that $\bignorm(\mx)$ has full row rank, let $\vec\beta_0$ denote the ground truth parameter and $\vec\beta^*$ denote the estimated parameter that maximizes \eqref{eq:mle}. If Assumption~\ref{ass:bound} holds, then for any $\epsilon>0$, $0<\delta<1$, we have 
%\begin{equation}
%\Pr(||\vec\beta^*-\vec\beta_0||\le\epsilon)\ge 1-\delta
%\end{equation}
%when $n\ge \frac {8(m-1)^2c^2d\ln\frac {2d} {\delta}} {\lambda_1^2\epsilon^2}$
%where $c$ and $\lambda_1$ are the same as in Theorem~\ref{thm:msebound}.
%}
%\begin{proof}
%Let $\epsilon\ge\frac {\sqrt{8(m-1)^2c^2d\ln(\frac {2d} {\delta})}} {\lambda_1\sqrt{n}}$. By Theorem~\ref{thm:msebound}, $n\ge \frac {8(m-1)^2c^2d\ln\frac {2d} {\delta}} {\lambda_1^2\epsilon^2}$.
%\end{proof}
\onecolumn

\section{Code for Key Algorithms}

\subsection{MLE for $\plxp$}

The objective function:
\begin{verbatim}
% obj_mle_plx.m
function nll = obj_mle_plx(beta, features, rankings, l)
% To compute the negative log likelihood
% beta: the parameter to be optimized
% features: m-by-d-by-n feature tensor
% rankings: full rankings from agents
% l: the number of top ranked alternatives to be used
% nll: the negative log likelihood
[n, ~] = size(rankings);
nll = 0;
for j = 1:n
    ranking = rankings(j, :);
    theta = features(:, :, j)*beta';
    gamma = exp(theta);
    s = sum(gamma);
    for i = 1:l
        nll = nll - theta(ranking(i)) + log(s);
        s = s - gamma(ranking(i));
    end
end
end
\end{verbatim}

\noindent MLE for $\plx$:
\begin{verbatim}
% mle_plx.m
function [beta, itr] = mle_plx(features, rankings, l, beta0)
% To compute the estimated beta given data
% features: m-by-d-by-n feature tensor
% rankings: full rankings from agents
% l: the number of top ranked alternatives to be used
% beta0: initial parameter (randomly chosen, any seed is fine)
% beta: optimal parameter given the data
% itr: iterations used to find beta
[beta, ~, ~, output] = fminunc(@(x) obj_mle_plx(x, features, rankings, l), beta0);
itr = output.iterations;
end
\end{verbatim}
\subsection{EM for $k$-$\plxp$}

The EM algorithm:
\begin{verbatim}
% em_kplx.m
function [alphas, betas, te, tm, estfull] = em_kplx( features, rankings, alphas0,
betas0, itr )
% Implementation of EM algorithm for k-PL-x
% features: m-by-d-by-n feature tensor
% rankings: rankings from agents
% alphas0: initial values for mixing coefficients (randomly chosen)
% betas0: initial values for parameters of each component model (randomly chosen)
% itr: number of iterations to run
% alphas: estimated mixing coefficients
% betas: estimated parameters for each component model
% te: E-step running time
% tm: M-step running time
% estfull: detailed estimates by iteration
[m, d, n] = size(features);
k = length(alphas0); % number of components
if k == 1
    itr = 1;
end
alphas = alphas0;
betas = betas0;
te = 0;
tm = 0;
estfull = zeros(itr, k*(d+1)+2);
ws = zeros(n, k);
for EMiter = 1:itr
    % E-step
    tes = tic;
    newalphas = zeros(1, k);
    for j = 1:n
        ranking = rankings(j, :);
        w = zeros(1, k);
        for r = 1:k
            w(r) = alphas(r)*prplx(features(:, :, j), ranking, betas(r,:));
        end
        w = w/sum(w);
        ws(j, :) = w;
        newalphas = newalphas + w;
    end
    tee = toc(tes);
    te = te + tee;
    % M-step
    tms = tic;
    alphas = newalphas / sum(newalphas);
    cps = zeros(k, d);
    for r = 1:k
        cps(r, :) = wmle_plx(features, rankings, ws(:, r), betas(r, :));
    end
    tme = toc(tms);
    tm = tm + tme;
    betas = cps;
    estfull(EMiter,:) = [alphas,reshape(betas',1,k*d),te,tm];
end
end
\end{verbatim}

\noindent Function \texttt{prplx}:
\begin{verbatim}
% prplx.m
function pr = prplx( feature, ranking, beta, k )
l = length(ranking);
% To compute the probability of a ranking under PL-x
% feature: feature matrix for the agent
% ranking: the agent's preference
% beta: the parameter of the model
% k: the number of top ranked alternatives to be used
% pr: the probability of this ranking
if nargin < 4
    k = l - 1;
end
theta = feature*beta';
gamma = exp(theta);
gamma = gamma/sum(gamma);
pr = 1;
s = 0;
for j = 1:l
    s = s + gamma(ranking(j));
end
for i = 1:k
    tt = gamma(ranking(i));
    pr = pr*tt/s;
    s = s - tt;
end
end
\end{verbatim}

\noindent Function \texttt{wmle\_plx}:
\begin{verbatim}
% wmle_plx.m
function beta = wmle_plx(features, rankings, ws, beta0)
% MLE algorithm for PL-x with different weights across rankings. 
% This algorithm is used in the M step of EM algorithm for k-PLx
% features: m-by-d-by-n feature tensor
% rankings: preferences from agents
% ws: weights of each ranking
% beta0: random initial value of the parameter
% beta: estimated parameter
beta = fminunc(@(x) obj_wmle_plx(x, features, rankings, ws), beta0);
end
\end{verbatim}

\noindent Function \texttt{obj\_wmle\_plx}
\begin{verbatim}
% obj_wmle_plx.m
function nll = obj_wmle_plx(beta, features, rankings, ws)
% Objective function of MLE algorithm for PLx with different weights on each ranking
% beta: parameter of the model
% features: m-by-d-by-n feature tensor
% rankings: preferences of agents
% ws: weights of the rankings
% nll: negative log likelihood
[n, ~] = size(rankings);
nll = 0;
for j = 1:n
    theta = features(:, :, j)*beta';
    gamma = exp(theta);
    gamma = gamma/sum(gamma);
    nll = nll - ws(j)*log(prpl(gamma, rankings(j,:)));
end
end
\end{verbatim}

\noindent Function \texttt{prpl}
\begin{verbatim}
% prpl.m
function pr = prpl( theta, ranking, k )
% to compute the probability of a ranking given PL
% theta: the parameter of PL
% ranking: the probability of which to be computed
% k: the number of top ranked alternatives considered
% pr: probability of the ranking
l = length(ranking);
if nargin < 3
    k = l - 1;
end
pr = 1;
s = 0;
for j = 1:l
    s = s + theta(ranking(j));
end
for i = 1:k
    tt = theta(ranking(i));
    pr = pr*tt/s;
    s = s - tt;
end
end
\end{verbatim}

\end{document}